\newcommand{\BBound}{\kappa_B}
\newcommand{\PertBound}{W}
\newcommand{\Diameter}{D}
\newcommand{\CostGradBound}{G}
\newcommand{\hor}{H}
\newcommand{\fixedK}{\mathbb{K}}
\newcommand{\BigBound}{D}
\newcommand{\dimn}{d}
\newcommand{\mincovar}{\sigma^2}
\newcommand{\scmaincost}{\alpha}
\newcommand{\Glb}{\frac{1}{4\kappa^4}}
\def\M{{\mathcal M}}
\def\D{{\mathcal D}}
\def\K{{\mathcal K}}
\def\reals{{\mathbb R}}
\def\norm#1{\mathopen\| #1 \mathclose\|}
\newcommand{\poly}{\mathop{\mbox{\rm poly}}}
\newcommand{\ignore}[1]{}
\def\reals{{\mathbb R}}
\def\bold0{\mathbf{0}}
\def\epsilon{\varepsilon}
\def\grad{\nabla}
\newcommand{\defeq}{\triangleq}
\newtheorem{theorem}{Theorem}[section]
\newtheorem{claim}[theorem]{Claim}
\newtheorem{lemma}[theorem]{Lemma}
\newtheorem{corollary}[theorem]{Corollary}
\newtheorem{definition}[theorem]{Definition}
\newtheorem{assumption}[theorem]{Assumption}
\newcommand{\newreptheorem}[2]{%
\newenvironment{rep#1}[1]{%
 \def\rep@title{#2 \ref{##1}}%
 \begin{rep@theorem}}%
 {\end{rep@theorem}}}
\newcommand{\namedref}[2]{\mbox{\hyperref[#2]{#1~\ref*{#2}}}}
\newcommand{\figurerefb}[2]{\mbox{\hyperref[#1]{Figure~\ref*{#1}#2}}}
\newcommand{\equationref}[1]{\mbox{\hyperref[#1]{(\ref*{#1})}}}
\renewcommand{\eqref}{\equationref}
\numberwithin{equation}{section}
\newcommand{\braces}[1]{\left\{#1\right\}}
\newcommand{\pa}[1]{\left(#1\right)}
\newcommand{\abs}[1]{\left|#1\right|}
\def\memdiam{D}
\def\memgradbound{G_f}
\newif\ificmlpaper
\newif\ifarxiv
\begin{document}
\title{Logarithmic Regret for Online Control}
\author{
  Naman Agarwal$^{1}$ \qquad Elad Hazan$^{1\,2}$ \qquad Karan Singh$^{1\,2}$\\
  \\
  $^1$ Google AI Princeton \\
  $^2$ Department of Computer Science, Princeton University \\
  \texttt{namanagarwal@google.com}, \texttt{\{ehazan,karans\}@princeton.edu}\\
}\maketitle
\begin{abstract}
We study optimal regret bounds for control in linear dynamical systems under adversarially changing strongly convex cost functions, given the knowledge of transition dynamics. This includes several well studied and fundamental frameworks such as the Kalman filter and the linear quadratic regulator. State of the art methods achieve regret which scales as $O(\sqrt{T})$, where $T$ is the time horizon. 

We show that the optimal regret in this setting can be significantly smaller, scaling as $O(\poly(\log T))$. This regret bound is achieved by two different efficient iterative methods, online gradient descent and online natural gradient. 
\end{abstract}

\section{Introduction}

Algorithms for regret minimization typically attain one of two performance guarantees. For general convex losses, regret scales as square root of the number of iterations, and this is tight. However, if the loss function exhibit more curvature, such as quadratic loss functions, there exist algorithms that attain poly-logarithmic regret. This distinction is also known as ``fast rates" in statistical estimation. 

Despite their ubiquitous use in online learning and statistical estimation, logarithmic regret algorithms are almost non-existent in control of dynamical systems. This can be attributed to fundamental challenges in computing the optimal controller in the presence of noise. 

Time-varying cost functions in dynamical systems can be used to model unpredictable dynamic resource constraints, and the tracking of a desired sequence of exogenous states. At a pinch, if we have changing (even, strongly) convex loss functions, the optimal controller for a linear dynamical system is not immediately computable via a convex program. For the special case of quadratic loss, some previous works~\cite{cohen2018online} remedy the situation by taking a semi-definite relaxation, and thereby obtain a controller which has provable guarantees on regret and computational requirements. However, this semi-definite relaxation reduces the problem to regret minimization over linear costs, and removes the curvature which is necessary to obtain logarithmic regret. 

In this paper we give the first efficient poly-logarithmic regret algorithms for controlling a linear dynamical system with noise in the dynamics (i.e. the standard model). Our results apply to general convex loss functions that are strongly convex, and not only to quadratics.

\begin{center}
\begin{tabular}{|c|c|c|c|}
\hline
Reference & Noise  &  Regret & loss functions \\
\hline
\hline
\cite{abbasi2014tracking} & {\bf none} & $O(\log^2 {T})$ & quadratic (fixed hessian) \\
\hline
\cite{agarwal2019online} & adversarial & $O(\sqrt{T})$ & convex \\
\hline
\cite{cohen2018online} & stochastic & $O(\sqrt{T})$ & quadratic \\
\hline
{\bf here } & stochastic & $O(\log^7 {T})$ & strongly convex \\
\hline
\end{tabular}
\end{center}

\subsection{Our Results}

The setting we consider is a linear dynamical system, a continuous state Markov decision process with linear transitions, described by the following equation: 
\begin{equation}
\label{eqn:lds}
   x_{t+1} = Ax_t + Bu_t + w_t. 
\end{equation}
Here $x_t$ is the state of the system, $u_t$ is the action (or control) taken by the controller, and $w_t$ is the noise. 
In each round $t$, the learner outputs an action $u_t$ upon observing the state $x_t$ and incurs a cost of $c_t(x_t, u_t)$, where $c_t$ is convex. The objective here is to choose a sequence of adaptive controls $u_t$ so that a minimum total cost may be incurred.

The approach taken by \cite{cohen2018online} and other previous works is to use a semi-definite relaxation for the controller. However, this removes the properties associated with the curvature of the loss functions, by reducing the problem to an instance of online linear optimization. It is known that without curvature, $O(\sqrt{T})$ regret bounds are tight (see \cite{OCObook}).  

Therefore we take a different approach, initiated by \cite{agarwal2019online}. We consider controllers that depend on the previous noise terms, and take the form $u_t = \sum_{i=1}^H M_i w_{t-i}$. While this resulting convex relaxation does not remove the curvature of the loss functions altogether, it results in an overparametrized representation of the controller, and it is not a priori clear that the loss functions are strongly convex with respect to the parameterization. We demonstrate the appropriate conditions on the linear dynamical system under which the strong convexity is retained.

Henceforth we present two methods that attain poly-logarithmic regret. They differ in terms of the regret bounds they afford and the computational cost of their execution. The online gradient descent update (OGD) requires only gradient computation and update, whereas the online natural gradient (ONG) update, in addition, requires the computation of the preconditioner, which is the expected Gram matrix of the Jacobian, denoted $J$, and its inverse. However, the natural gradient update admits an instance-dependent upper bound on the regret, which while being at least as good as the regret bound on OGD, offers better performance guarantees on benign instances (See Corollary~\ref{cor:1d}, for example).

\begin{center}
\begin{tabular}{|c|c|c|}
\hline
Algorithm  & Update rule (simplified) & Applicability \\
\hline
\hline
OGD  &  $M_{t+1} \leftarrow M_t - \eta_t \nabla f_t(M_t ) $ & $\exists K$, diag $L$ s.t. $A-BK = QLQ^{-1}$ \\
\cline{1-2}
ONG  &  $M_{t+1} \leftarrow M_t - \eta_t (\mathbb{E} [J^\top  J])^{-1} \nabla f_t(M_t ) $ & $\|L\| \leq 1- \delta$, $\|Q\|,\|Q\|^{-1} \leq \kappa$\\
\hline
\end{tabular}
\end{center}


\subsection{Related Work}

For a survey of linear dynamical systems (LDS), as well as learning, prediction and control problems, see \cite{stengel1994optimal}. 
Recently, there has been a renewed interest in learning dynamical systems in the machine learning literature. For fully-observable systems, sample complexity and regret bounds for control (under Gaussian noise) were obtained in \cite{abbasi2011regret,dean2018regret,a2}. The technique of spectral filtering for learning and open-loop control of partially observable systems was introduced and studied in \cite{hazan2017learning, arora2018towards, hazan2018spectral}.  Provable control in the Gaussian noise setting via the policy gradient method was also studied in \cite{fazel2018global}.

The closest work to ours is that of \cite{abbasi2014tracking} and \cite{cohen2018online}, aimed at controlling LDS with adversarial loss functions. The authors in \cite{abbasi2011regret} obtain a $O(\log^2 T)$ regret algorithm for changing quadratic costs (with a fixed hessian), but for dynamical systems that are noise-free. In contrast, our results apply to the full (noisy) LDS setting, which presents the main challenges as discussed before. Cohen et al. \cite{cohen2018online} consider changing quadratic costs with stochastic noise to achieve a $O(\sqrt{T})$ regret bound.  

We make extensive use of techniques from online learning  \cite{cesa2006prediction, shalev2012online, OCObook}.  Of particular interest to our study is the setting of online learning with memory \cite{anava2015online}. We also build upon the recent control work of \cite{agarwal2019online}, who use online learning techniques and convex relaxation to obtain provable bounds for LDS with adversarial perturbations.

\section{Problem Setting}

We consider a linear dynamical system as defined in \eqref{eqn:lds} with costs $c_t(x_t, u_t)$, where $c_t$ is strongly convex. In this paper we assume that the noise $w_t$ is a random variable generated independently at every time step. For any algorithm $\mathcal{A}$, we attribute a cost defined as
\begin{equation*}
J_T(\mathcal{A}) = \mathbb{E}_{\{w_{t}\}}\left[\sum_{t=1}^{T} c_t(x_t,u_t)\right],
\end{equation*}
where $x_{t+1}=Ax_t+Bu_t+w_t$, $u_t=\mathcal{A}(x_1, \dots x_t)$ and $\mathbb{E}_{\{w_t\}}$ represents the expectation over the entire noise sequence. For the rest of the paper we will drop the subscript $\{w_t\}$ from the expectation as it will be the only source of randomness. Overloading notation, we shall use $J_T(K)$ to denote the cost of a linear controller $K$ which chooses the action as $u_t=-Kx_t$.

\paragraph{Assumptions.}

In the paper we assume that $x_1=0$ \footnote{This is only for convenience of presentation. The case with a bounded $x_1$ can be handled similarly.}, as well as the following conditions.

\begin{assumption}\label{ass:BW}
We assume that $\|B\|\leq\BBound$.
Furthermore, the perturbation introduced per time step is bounded, i.i.d, and zero-mean with a lower bounded covariance i.e. 
\[\forall t\;w_t \sim \D_{w}, \mathbb{E}[w_t]= 0,  \mathbb{E}[w_tw_t^{\top}] \succeq \mincovar I \text{ and } \|w_t\| \leq \PertBound\]  
\end{assumption}

While we make the assumption that the noise vectors are bounded with probability 1, we can generalize to the case of sub-gaussian noise by conditioning on the event that none of the noise vectors are ever large. This can be done at an expense of another multiplicative $\log(T)$ factor in the regret. Furthermore we assume the following,

\begin{assumption}\label{a2}
The costs $c_t(x, u)$ are $\scmaincost$-strongly convex. Further, as long as it is guaranteed that $\|x\|, \|u\|\leq \Diameter$, it holds that
\[ \|\nabla_x c_t(x,u)\|, \|\nabla_u c_t(x,u)\|\leq \CostGradBound\Diameter.\]
\end{assumption}

\noindent The class of linear controllers we work with are defined as follows. 

\begin{definition}[Diagonal Strong Stability]
Given a dynamics $(A,B)$, a linear policy/matrix $K$ is $(\kappa, \gamma)$-diagonal strongly stable for real numbers $\kappa \geq 1, \gamma < 1$, if there exists a \textit{complex} diagonal matrix $L$ and a non-singular complex matrix $Q$, such that $A-BK=QLQ^{-1}$ and the following conditions are met:
\begin{enumerate}
    \item The spectral norm of $L$ is strictly smaller than one, i.e., $\|L\| \leq 1-\gamma$.
    \item The controller and the transforming matrices are bounded, i.e., $\|K\|\leq \kappa$ and $ \|Q\|,\|Q^{-1}\| \leq \kappa$.
\end{enumerate}
\end{definition}

The notion of strong stability was introduced by \cite{cohen2018online}. Both strong stability and diagonal strong stability are quantitative measures of the classical notion of stabilizing controllers \footnote{A controller $K$ is stabilizing if the spectral radius of $A - BK \leq 1 - \delta$} that permit a discussion on non-asymptotic regret bounds. We note that an analogous notion for quantification of open-loop stability appears in the work of \cite{hazan2018spectral}. 

On the generality of the diagonal strong stability notion, the following comment may be made: while not all matrices are complex diagonalizable, an exhaustive characterization of $m\times m$ complex diagonal matrices is the existence of $m$ linearly independent eigenvectors; for the later, it suffices, but is not necessary, that a matrix has $m$ distinct eigenvalues (See~\cite{strang1993introduction}). It may be observed that almost all matrices admit distinct eigenvalues, and hence, are complex diagonalizable insofar the complement set admits a zero-measure. By this discussion, \textit{almost all} stabilizing controllers are diagonal strongly stable for some $\kappa, \gamma$. The astute reader may note the departure here from the more general notion -- strongly stability -- in that \textit{all} stabilizing controllers are strongly stable for some choice of parameters.




\paragraph{Regret Formulation.}
Let $\K=\{K: K \text{ is } (\kappa,\gamma)\text{-diagonal strongly stable}\}$. For an algorithm $\mathcal{A}$, the notion of regret we consider is \emph{pseudo-regret}, i.e. the sub-optimality of its cost with respect to the cost for the best linear controller i.e.,
\begin{align*}
    & \texttt{Regret} = J_T(\mathcal{A}) - \min_{K\in \K}J_T(K).
\end{align*}
\section{Preliminaries}

\paragraph{Notation.}
We reserve the letters $x,y$ for states and $u,v$ for actions. 
We denote by $d_x, d_u$ to be the dimensionality of the state and the control space respectively. Let $\dimn = \max(d_x, d_u)$. We reserve capital letters $A,B,K,M$ for matrices associated with the system and the policy. Other capital letters are reserved for universal constants in the paper. We use the shorthand $M_{i:j}$ to denote a subsequence $\{M_i, \dots, M_j\}$. For any matrix $U$, define $U_{\text{vec}}$ to be a flattening of the matrix where we stack the columns upon each other. Further for a collection of matrices $M = \{M^{[i]}\}$, let $M_{vec}$ be the flattening defined by stacking the flattenings of $M^{[i]}$ upon each other. We use $\|x\|_U^2 =x^\top Ux$ to denote the matrix induced norm. The rest of this section provides a recap of the relevant definitions and concepts introduced in \cite{agarwal2019online}.

\subsection{Reference Policy Class}

 For the rest of the paper, we fix a $(\kappa, \gamma)$-diagonally strongly stable matrix $\fixedK$ (The bold notation is to stress that we treat this matrix as fixed and not a parameter). Note that this can be any such matrix and it can be computed via a semi-definite feasibility program \cite{cohen2018online} given the knowledge of the dynamics, before the start of the game. We work with following the class of policies.   

\begin{definition}[Disturbance-Action Policy]\label{defn:policy}
A disturbance-action policy $M=(M^{[0]},\dots,M^{[\hor-1]})$, for horizon $\hor \geq 1$ is defined as the policy which at every time $t$, chooses the recommended action $u_{t}$ at a state $x_t$, defined \footnote{$x_t$ is completely determined given $w_0 \ldots w_{t-1}$. Hence, the use of $x_t$ only serves to ease the burden of presentation.} as
\[ u_t(M) \defeq -\fixedK x_t+\sum_{i=1}^{\hor} M^{[i-1]} w_{t-i}. \]
For notational convenience, here it may be considered that $w_i=0$ for all $i<0$.
\end{definition}

The policy applies a linear transformation to the disturbances observed in the past $H$ steps. Since $(x, u)$ is a linear function of the disturbances in the past under a linear controller $K$, formulating the policy this way can be seen as a relaxation of the class of linear policies. Note that $\fixedK$ is a fixed matrix and is not part of the parameterization of the policy. As was established in \cite{agarwal2019online} (and we include the proof for completeness), with the appropriate choice of parameters, superimposing such a $\fixedK$, to the policy class allows it to approximate any linear policy in terms of the total cost suffered with a finite horizon parameter $H$. 

We refer to the policy played at time $t$ as $M_t = \{M_t^{[i]}\}$ where the subscript $t$ refers to the time index and the superscript $[i-1]$ refers to the action of $M_t$ on $w_{t-i}$. Note that such a policy can be executed because $w_{t-1}$ is perfectly determined on the specification of $x_{t}$ as $w_{t-1}=x_{t}-Ax_{t-1}-Bu_{t-1}$.  

\subsection{Evolution of State}
This section describes the evolution of the state of the linear dynamical system under a non-stationary policy composed of a sequence of $T$ policies, where at each time the policy is specified by $M_t=(M_t^{[0]},\dots,M_t^{[\hor-1]})$. We will use $M_{0:T-1}$ to denote such a non-stationary policy. The following definitions ease the burden of notation.
\begin{enumerate}
    \item Define $\tilde{A}=A-B\fixedK$. $\tilde{A}$ shall be helpful in describing the evolution of state starting from a non-zero state in the absence of disturbances.
    \item For any sequence of matrices $M_{0:H}$, define $\Psi_{i}$ as a linear function that describes the effect of $w_{t-i}$ on the state $x_{t}$, formally defined below.
\end{enumerate}
\begin{definition}
For any sequence of matrices $M_{0:H}$, define the disturbance-state transfer matrix $\Psi_{i}$ for $i \in \{0, 1, \ldots H\}$, to be a function with $h+1$ inputs defined as 
\begin{equation*}
    \Psi_{i}(M_{0:H}) \defeq 
    \tilde{A}^i \mathbf{1}_{i\leq H} + \sum_{j=0}^{H} \tilde{A}^j BM_{H-j}^{[i-j-1]} \mathbf{1}_{i-j \in [1,H]}.
\end{equation*}
\end{definition}
\noindent It will be important to note that $\psi_i$ is a \textbf{linear} function of its argument.

\subsection{Surrogate State and Surrogate Cost}

This section introduces a couple of definitions required to describe our main algorithm. In essence they describe a notion of state, its derivative and the expected cost if the system evolved solely under the past $H$ steps of a non-stationary policy.
\begin{definition}[Surrogate State \& Surrogate Action]
Given a sequence of matrices $M_{0:H+1}$ and $2H$ independent invocations of the random variable $w$ given by $\{w_j \sim \D_{w}\}_{j=0}^{2H-1}$, define the following random variables denoting the surrogate state and the surrogate action:
\begin{align*}
  y(M_{0:H}) &= \sum_{i=0}^{2\hor} \Psi_{i}(M_{0:H}) w_{2H-i-i},\\
  v(M_{0:H+1}) &= -\fixedK y(M_{0:H}) + \sum_{i=1}^H M_{H+1}^{[i-1]} w_{2H-i}.
\end{align*}
When $M$ is the same across all arguments we compress the notation to $y(M)$ and $v(M)$ respectively.
\end{definition}

\begin{definition}[Surrogate Cost]
\label{defn:idealcost}
Define the surrogate cost function $f_t$ to be the cost associated with the surrogate state and the surrogate action defined above, i.e.,
\begin{equation*}
    f_{t}(M_{0:H+1}) = \mathbb{E}\left[c_t(y(M_{0:H}), v(M_{0:H+1}))\right].
\end{equation*}
When $M$ is the same across all arguments we compress the notation to $f_t(M)$.
\end{definition}

\begin{definition}[Jacobian]
\label{def:jacobian}
Let $z(M)=\begin{bmatrix}
  y(M) \\v(M)
  \end{bmatrix}$. Since $y(M), v(M)$ are random linear functions of $M$, $z(M)$ can be reparameterized as $ z(M) = JM_{\text{vec}} = \begin{bmatrix} J_y \\ J_v \end{bmatrix} M_{\text{vec}}$,
where $J$ is a random matrix, which derives its randomness from the random perturbations $w_i$.
\end{definition}

\subsection{OCO with Memory}\label{sec:oco_mem}
We now describe the setting of online convex optimization with memory introduced in \cite{anava2015online}. 
In this setting, at every step $t$, an online player chooses some point $x_t \in \K \subset \reals^d$, a loss function $f_t : \K^{\hor+1}\mapsto \reals$ is then revealed, and the learner suffers a loss of $f_t(x_{t-\hor:t})$. We assume a certain coordinate-wise Lipschitz regularity on $f_t$ of the form such that, for any $j \in \braces{0,\dots,\hor}$, for any $x_{0:\hor},\tilde{x}_j\in \K$,
\begin{equation}\label{eq:memlip}
    \abs{f_t(x_{0:j-1},x_j,x_{j+1:\hor}) - f_t(x_{0:j-1},\tilde{x}_j,x_{j+1:\hor})}
    \leq L\norm{x_j - \tilde{x}_j}.
\end{equation}
In addition, we define ${f}_t(x) = f_t(x,\dots,x)$, and we let
\begin{equation}\label{eq:membounds}
    \memgradbound = \sup\limits_{t \in \braces{0,\dots,T}, x \in \K} \norm{\grad {f}_t(x)}, \; \text{  }\;  \memdiam = \sup\limits_{x,y\in \K} \norm{x-y}.
\end{equation}
The resulting goal is to minimize the \emph{policy regret} \cite{arora2012online}, which is defined as
\begin{equation*}
    \texttt{PolicyRegret} = \sum\limits_{t=\hor}^T f_t(x_{t-\hor:t}) - \min\limits_{x \in \K} \sum\limits_{t=H}^T {f}_t(x).
\end{equation*}


\section{Algorithms \& Statement of Results}
\label{sec:main}

The two variants of our method are spelled out in Algorithm \ref{alg:mainA}. Theorems \ref{thm:main} and \ref{thm:main_2} provide the main guarantees for the two algorithms. 

\begin{algorithm}[t!]
\caption{Online Control Algorithm} 
\label{alg:mainA}
\begin{algorithmic}[1]
\STATE \textbf{Input:} Step size schedule $\eta_t$, Parameters $\BBound, \kappa, \gamma, T$.
\STATE Define $\hor = \gamma^{-1}\log(T\kappa^2)$
\STATE Define $\M = \{M = \{M^{[0]} \ldots M^{[\hor-1]}\}: \|M^{[i-1]}\| \leq \kappa^3 \BBound (1 - \gamma)^i\}$.
\STATE Initialize $M_0\in \M$ arbitrarily.
\FOR{$t = 0, \ldots, T-1$}
\STATE Choose the action: \vspace{-0.3cm}\[u_t = -\fixedK x_t+ \sum_{i=1}^{\hor} M_t^{[i-1]} w_{t-i}.\]\vspace{-0.4cm}
\STATE Observe the new state $x_{t+1}$ and record $w_t=x_{t+1}-Ax_t-Bu_t$.
\STATE \textbf{Online Gradient Update:} 
\[M_{t+1} = \Pi_{\M}(M_t - \eta_t \nabla f_{t}(M_t)) \]
\STATE \textbf{Online Natural Gradient Update:} 
\[M_{vec,t+1} = \Pi_{\M}(M_{vec,t} - \eta_t (\mathbb{E} [J^TJ])^{-1}\nabla_{M_{vec, t}} f_{t}(M_t))\]
\ENDFOR
\end{algorithmic}
\end{algorithm}

\paragraph{Online Gradient Update}

\begin{theorem}[Online Gradient Update]\label{thm:main}
Suppose Algorithm~\ref{alg:mainA} (Online Gradient Update) is executed with $\fixedK$ being any $(\kappa, \gamma)$-diagonal strongly stable matrix and $\eta_t=\Theta\left(\scmaincost  \mincovar t\right)^{-1}$, on an LDS satisfying Assumption \ref{ass:BW} with control costs satisfying Assumption \ref{a2}. Then, it holds true that
\[J_T(\mathcal{A}) - \min_{K\in \K}J_T(K) \leq \tilde{O}\left(\frac{\CostGradBound^2 \PertBound^4}{\scmaincost \mincovar}  \log^7(T)\right).\]
\end{theorem}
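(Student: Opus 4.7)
My plan is to follow the template of \cite{agarwal2019online}, converting the LDS regret into policy regret on the surrogate loss $f_t$ in the OCO-with-memory setting, and then exploiting strong convexity of the idealized loss in the overparametrized space of disturbance-action policies. Concretely, I would first invoke the reduction of \cite{agarwal2019online} to bound $J_T(\mathcal{A}) - \min_{K \in \K} J_T(K)$ by two terms: (i) the policy regret $\sum_t f_t(M_{t-\hor:t}) - \min_{M \in \M} \sum_t f_t(M)$ on the surrogate costs; and (ii) a bias accounting for the truncation of the disturbance convolution to horizon $\hor$, the approximation of arbitrary $K \in \K$ by policies in $\M$, and the deviation of realized costs $c_t(x_t,u_t)$ from their surrogate expectations $f_t(M_t)$. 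With $\hor = \gamma^{-1}\log(T\kappa^2)$ and the constraint $\|M^{[i-1]}\| \leq \kappa^3 \BBound(1-\gamma)^i$, the geometric decay of $\|\tilde{A}^j\|$ implied by diagonal strong stability renders this bias $\poly\log(T)$.

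The heart of the argument is establishing $\alpha'$-strong convexity of the idealized surrogate cost $f_t(M)$ (single $M$ argument) in the flattened parameter $M_{vec}$, for some $\alpha' = \Omega(\scmaincost \mincovar)$. Writing $z(M) = J M_{vec}$ as in Definition~\ref{def:jacobian}, where $J$ is a random but $M$-independent matrix depending linearly on the noise $\{w_j\}$, the $\scmaincost$-strong convexity of $c_t$ yields
\[ f_t(M) - f_t(M') - \innerprod{\nabla f_t(M')}{M_{vec} - M'_{vec}} \geq \frac{\scmaincost}{2}\, \mathbb{E}\|J(M_{vec} - M'_{vec})\|^2, \]
so it suffices to prove $\mathbb{E}[J^\top J] \succeq \Omega(\mincovar) \cdot I$. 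This is where diagonal strong stability enters: diagonalizing $\tilde{A} = QLQ^{-1}$, the rows of $J$ associated with distinct noise vectors $w_j$ are independent, and one can isolate the ``freshest'' contribution of each $w_j$---carrying a single factor of $BM^{[\cdot]}$ uncontaminated by higher powers of $\tilde{A}$---to recover a $\mincovar$-scale lower bound on $\lambda_{\min}(\mathbb{E}[J^\top J])$, at the price of polynomial factors in $\kappa$ and $1/\gamma$.

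With strong convexity secured, I would appeal to the standard template for strongly convex OCO with memory~\cite{anava2015online}: with $\eta_t = \Theta(1/(\scmaincost \mincovar t))$, the regret decomposes into (a) the strongly convex OGD regret $O(\memgradbound^2 \log T / (\scmaincost \mincovar))$ against the idealized $f_t$, plus (b) a memory movement penalty of order $L \cdot \hor \cdot \sum_t \|M_{t+1} - M_t\|$ arising from \eqref{eq:memlip}, which OGD controls via $\|M_{t+1} - M_t\| \leq \eta_t \memgradbound$, contributing $O(L \hor \memgradbound \log T / (\scmaincost \mincovar))$. Bounding $\memgradbound$ and $L$ by polynomials in $\kappa, 1/\gamma, \PertBound, \CostGradBound, \dimn$ and $\hor$---with $\hor$ itself logarithmic in $T$---accounts for the $\log^7(T)$ scaling after combining with the bias term.

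The hard part will be the strong convexity step. The disturbance-action parameterization is overcomplete in $M_{vec}$, and without exploiting the fine structure of $\tilde{A}$ one cannot a priori rule out that $\mathbb{E}[J^\top J]$ is nearly singular, in which case the OGD analysis would collapse back to an $O(\sqrt{T})$ rate. Diagonal (rather than merely strong) stability is invoked precisely to ensure that each coordinate of $M_{vec}$ is excited by the noise process without destructive cross-cancellation across the various powers of $\tilde{A}$; producing a clean quantitative lower bound on $\lambda_{\min}(\mathbb{E}[J^\top J])$ will require careful tracking of the structure of $\Psi_i$ together with the independence of $\{w_j\}$.
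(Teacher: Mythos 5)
Your high-level plan coincides with the paper's proof skeleton: the reduction of LDS regret to policy regret on the surrogate losses plus a truncation/approximation bias (Lemma~\ref{lem:mainreductionlemma}), the strongly convex OCO-with-memory analysis with $\eta_t=\Theta(1/(\scmaincost\mincovar t))$ giving an $O\bigl((G_f^2+L\hor^2 G_f)\log T/\lambda\bigr)$ policy regret (Theorem~\ref{thm:oco_memory} with the constants of Lemma~\ref{lem:gdbound}), and the identification of the strong convexity modulus $\lambda$ with $\scmaincost\cdot\lambda_{\min}(\mathbb{E}[J^\top J])$. The gap is in the one step that constitutes the paper's real contribution, Lemma~\ref{lem:sc} via Lemma~\ref{lemma:mainjacob}: your sketched mechanism for lower bounding $\mathbb{E}[J^\top J]$ would not go through. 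Isolating, for each $w_j$, its ``freshest'' contribution to the state carries the perturbation into the parameters only through the map $m\mapsto (I_\hor\otimes B)m$, so any bound obtained that way degrades with $\sigma_{\min}(B)$ and vanishes when $B$ is rank-deficient; by contrast, the bound of Lemma~\ref{lem:sc}, $\lambda=\scmaincost\sigma^2\gamma^2/(36\kappa^{10})$, requires no lower bound on $B$ whatsoever.

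The argument that works must use the action Jacobian: $v(M)=-\fixedK y(M)+\tilde v_0(M)$ with $\tilde v_0(M)=\sum_{i=1}^{\hor}M^{[i-1]}w_{2\hor-i}$ exciting $M$ directly, not through $B$ --- but then one has to rule out cancellation between $-\fixedK J_y$ and $J_{\tilde v_0}$ inside $\mathbb{E}[J_v^\top J_v]$. The paper does this by a two-case analysis on whether $\|(I_\hor\otimes B)m\|$ exceeds $\gamma\|m\|/(3\kappa^3)$: in the first case $\mathbb{E}[J_y^\top J_y]$ dominates via the spectral lower bound $\mathbb{G}_I\succeq \frac{1}{4\kappa^4}I$ (Lemma~\ref{lemma:GIlemma}), which is itself proved by diagonalizing $A-B\fixedK$ (this is where diagonal strong stability is actually used) and reducing to the scalar Toeplitz-type matrix $\mathbb{G}(\psi)$, whose inverse is computed explicitly to show $\|\mathbb{G}(\psi)^{-1}\|\le 4$ (Lemma~\ref{lemma:mainmatrixlem}); in the second case the cross term is controlled by $\|\mathbb{Y}\|\le\gamma^{-1}\kappa^2$ (Lemma~\ref{lem:Ynorm}) so that the identity block from $\tilde v_0$ survives with constant $1/3$. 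Without an equivalent of this case split and the $\mathbb{G}(\psi)\succeq \tfrac14 I$ estimate, the claimed strong convexity --- and hence the $1/(\scmaincost\mincovar)$ rate and the $\log^7 T$ bound --- is not established. A minor additional slip: the memory penalty from \eqref{eq:memlip} is $L\hor^2\eta_t G_f$ per round (two nested sums over the history), not $L\hor\eta_t G_f$; this only shifts one logarithmic factor, but the $\log^7 T$ accounting should be done with the $G_f,L=\poly(\hor)$ bounds of Lemma~\ref{lem:gdbound}.
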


The above result leverages the following lemma which shows that the function $f_t(\cdot)$ is strongly convex with respect to its argument $M$. Note that strong convexity of the cost functions $c_t$ over the state-action space does not by itself imply the strong convexity of the surrogate cost $f_t$ over the space of controllers $M$. This is because, in the surrogate cost $f_t$,  $c_t$ is applied to $y(M), v(M)$ which themselves are linear functions of $M$; the linear map $M$ is necessarily column-rank-deficient. To observe this, note that $M$ maps from a space of dimensionality $H\times \textrm{dim}(x)\times \textrm{dim}(u)$ to that of $\textrm{dim}(x)+\textrm{dim}(u)$. The next theorem, which forms the core of our analysis, shows that this is not the case using the inherent stochastic nature of the dynamical system.

\begin{lemma}
\label{lem:sc}
If the cost functions $c_t(\cdot, \cdot)$ are $\alpha$-strongly convex, $\fixedK$ is a $(\kappa, \gamma)$ diagonal strongly stable matrix and Assumption \ref{ass:BW} is met then the idealized functions $f_t(M)$ are $\lambda$-strongly convex with respect to $M$ where 
    \[\lambda = \frac{\alpha\sigma^2 \gamma^2}{36\kappa^{10}} \]
\end{lemma}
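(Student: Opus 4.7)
The strategy is to bound the Hessian of $f_t$ from below by reducing to a second-moment bound on the Jacobian $J$. Since $z(M) = (y(M), v(M))$ is an affine function of $M$, the chain rule gives
$$\nabla^2 f_t(M) = \mathbb{E}\big[J^\top \nabla^2 c_t(z(M))\, J\big] \succeq \alpha\, \mathbb{E}[J^\top J],$$
using $\alpha$-strong convexity of $c_t$. It therefore suffices to show $\mathbb{E}[\|JN_{\text{vec}}\|^2] \geq (\lambda/\alpha) \|N_{\text{vec}}\|^2$ uniformly across all perturbation directions $N = (N^{[0]}, \dots, N^{[H-1]})$, which I would establish by explicitly decomposing the contributions of the independent noise samples.

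Writing $\|JN_{\text{vec}}\|^2 = \|\tilde y(N)\|^2 + \|\tilde v(N)\|^2$ with $\tilde y, \tilde v$ denoting the linear-in-$N$ parts, I would expand each as a linear combination $\sum_j A_j w_{2H-j}$ and $\sum_j B_j w_{2H-j}$ of the fresh noise samples, where
$$A_j = \tilde\Psi_j(N) = \sum_{i} \tilde A^{i} B N^{[j-i-1]} \mathbf{1}_{j-i\in[1,H]}, \qquad B_j = -\fixedK A_j + N^{[j-1]} \mathbf{1}_{j \in [1,H]}.$$
Using independence of the $w_j$ together with $\mathbb{E}[w_j w_j^\top] \succeq \sigma^2 I$, all cross-terms vanish and
$$\mathbb{E}[\|JN_{\text{vec}}\|^2] \;\geq\; \sigma^2 \sum_j \bigl(\|A_j\|_F^2 + \|B_j\|_F^2\bigr).$$

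The crux is a pointwise lower bound: for $j \in [1, H]$, $B_j$ contains a direct copy of $N^{[j-1]}$ coupled to $A_j$ only through the fixed matrix $\fixedK$. From $\|G\|_F \leq \|G - \fixedK A\|_F + \|\fixedK\| \|A\|_F$ and Cauchy--Schwarz, one obtains the elementary inequality
$$\|A\|_F^2 + \|G - \fixedK A\|_F^2 \;\geq\; \frac{\|G\|_F^2}{1 + \|\fixedK\|^2}.$$
Applying this with $A = A_j$ and $G = N^{[j-1]}$ and summing over $j \in [1,H]$ yields $\sum_j(\|A_j\|_F^2 + \|B_j\|_F^2) \geq \|N_{\text{vec}}\|^2/(1+\kappa^2)$, which already suffices for strong convexity of order $\alpha\sigma^2/\kappa^2$.

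The main obstacle is that the map $M \mapsto z(M)$ is column-rank-deficient (it sends an $H \cdot d_u d_x$-dimensional space into one of dimension $d_x + d_u$), so strong convexity survives only because the noise injects rank across the whole horizon; the direct $N^{[j-1]}$ term appearing in $v$ is precisely what separates distinct perturbations. To recover the stated constant $\alpha\sigma^2\gamma^2/(36\kappa^{10})$, the remaining step would carefully thread the diagonal strong stability $\tilde A = QLQ^{-1}$, $\|L\| \leq 1-\gamma$, $\|Q\|, \|Q^{-1}\| \leq \kappa$, through the geometric series defining $\tilde\Psi_j$, picking up $\kappa$-factors from the similarity transform and a $1/\gamma$ loss from bounding $\sum_i \|\tilde A^i\|^2$. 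This is bookkeeping, but is where the precise dependence on $\gamma$ and $\kappa$ is pinned down.
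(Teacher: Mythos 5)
Your proposal is correct, and it takes a genuinely different---and in fact simpler and quantitatively stronger---route than the paper. The paper computes $\mathbb{E}[J^\top J]$ exactly in Kronecker/block-Toeplitz form, proves the spectral bound $\mathbb{G}(\psi)\succeq \tfrac14 I$ by explicitly inverting that Toeplitz-type matrix (Lemma~\ref{lemma:mainmatrixlem}), uses diagonal strong stability to diagonalize $A-B\fixedK$ and reduce to a scalar $\psi$ (paying $\kappa^{-4}$), bounds $\|\mathbb{Y}\|\le\gamma^{-1}\kappa^2$, and finishes with a two-case analysis on $\|(I_\hor\otimes B)m\|$, arriving at $\frac{\gamma^2\sigma^2}{36\kappa^{10}}$. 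You instead discard all cross-sample structure: by independence and zero mean of the $w_j$, only the per-sample diagonal terms survive, and for each $j\in[1,H]$ the pair $\bigl(A_j,\;B_j=-\fixedK A_j+N^{[j-1]}\bigr)$ attached to the single noise vector $w_{2H-j}$ satisfies your elementary inequality, giving $\mathbb{E}[J^\top J]\succeq \frac{\sigma^2}{1+\kappa^2}I$ and hence $\nabla^2 f_t\succeq\frac{\alpha\sigma^2}{1+\kappa^2}I$. This resolves, per noise sample, exactly the cancellation the paper handles globally through its two cases (if $\fixedK A_j$ cancels $N^{[j-1]}$ then $A_j$ must be large and is charged to the $y$-block), and it removes the need for the Toeplitz inverse computation while using only $\|\fixedK\|\le\kappa$ and Assumption~\ref{ass:BW}---no stability of $A-B\fixedK$ at all. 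What the paper's heavier machinery buys is the exact Kronecker expression for $\mathbb{E}[J^\top J]$, which it reuses for the natural-gradient results; your bound serves those corollaries a fortiori.

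One correction to your closing paragraph: no further bookkeeping with $\tilde A=QLQ^{-1}$ is needed to ``recover'' the stated constant. Since $\kappa\ge1$ and $\gamma\le1$ we have $\frac{1}{1+\kappa^2}\ge\frac{\gamma^2}{36\kappa^{10}}$, so your (larger) strong-convexity modulus already implies the lemma as stated; moreover, threading the stability through the geometric series defining $A_j$ would only yield \emph{upper} bounds on $\|A_j\|_F$, which play no role in a lower bound on the Hessian.
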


\noindent We present the proof of simpler instances, including a one dimensional version of the theorem, in Section \ref{sec:sc1dim}, as they present the core ideas without the tedious notation necessitated by the general setting. We provide the general proof in Section \ref{sec:appsc} of the Appendix.

\paragraph{Online Natural Gradient Update}
\begin{theorem}[Online Natural Gradient Update]\label{thm:main_2}
Suppose Algorithm~\ref{alg:mainA} (Online Natural Gradient Update) is executed with $\eta_t=\Theta\left(\scmaincost t\right)^{-1}$, on an LDS satisfying Assumptions \ref{ass:BW} and with control costs satisfying Assumption \ref{a2}. Then, it holds true that
\[J_T(\mathcal{A}) - \min_{K\in \K}J_T(K) \leq \tilde{O}\left(\frac{\CostGradBound \PertBound^2}{\scmaincost  \mu}  \log^7(T)\right) \quad \text{where} \quad \mu^{-1} \defeq \max_{M\in \mathcal{M}} \|(\mathbb{E}[J^TJ])^{-1}\nabla_{M_{\text{vec}}} f_t(M)\|.\]
\end{theorem}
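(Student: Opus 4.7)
The plan is to mirror the proof of \theoremref{thm:main}, replacing the strongly-convex OGD regret argument with a preconditioned (``natural gradient'') analysis in the norm induced by $\bH \defeq \mathbb{E}[J^\top J]$. Doing so bypasses the multiplicative $(\mincovar \gamma^2/\kappa^{10})^{-1}$ loss introduced by \lemmaref{lem:sc} and replaces it with the instance-dependent quantity $\mu$ in the statement. The first step is the same reduction that underlies \theoremref{thm:main}: following \cite{agarwal2019online}, the control regret splits into (i) a policy-class approximation error between $\K$ and $\M$, (ii) a truncation error from keeping only the last $\hor$ noise terms in the state evolution, and (iii) the OCO-with-memory regret on $f_t$ against the best fixed $M^\star \in \M$. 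With $\hor = \gamma^{-1}\log(T\kappa^2)$, pieces (i) and (ii) contribute $\tilde{O}(1)$; the rest of the argument controls (iii).

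Next I would establish preconditioned strong convexity. By \definitionref{def:jacobian}, $z(M) = J M_{\mathrm{vec}}$ is a random linear function of $M$, so letting $\bar{f}_t(M) \defeq f_t(M,\dots,M)$ denote the memoryless surrogate (cf.~\eqref{eq:membounds}), the chain rule yields
\begin{equation*}
\nabla^2_{M_{\mathrm{vec}}} \bar{f}_t(M) \;=\; \mathbb{E}\!\left[J^\top \nabla^2 c_t(z(M))\, J\right] \;\succeq\; \scmaincost\cdot \bH.
\end{equation*}
That is, $\bar{f}_t$ is $\scmaincost$-strongly convex in the $\bH$-norm. This is strictly stronger than \lemmaref{lem:sc}, because the rank-deficiency of the map $M\mapsto z(M)$ is now absorbed into $\bH$ rather than degrading the strong-convexity constant. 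Viewing the natural-gradient update as OGD on $\bar{f}_t$ in the $\bH$-inner product and executing the standard strongly-convex OGD telescoping in that geometry with $\eta_t = 1/(\scmaincost t)$ gives
\begin{equation*}
\sum_{t=\hor}^{T}\bigl(\bar{f}_t(M_t) - \bar{f}_t(M^\star)\bigr) \;\leq\; \sum_{t=\hor}^{T} \tfrac{\eta_t}{2}\, g_t^\top \bH^{-1} g_t,
\end{equation*}
where $g_t = \nabla_{M_{\mathrm{vec}}} f_t(M_t)$. Estimating $g_t^\top \bH^{-1} g_t \leq \|g_t\|\cdot\|\bH^{-1} g_t\| \leq \memgradbound \cdot \mu^{-1}$ and summing $\sum_t 1/t$ produces an $O(\memgradbound/(\scmaincost\mu) \cdot \log T)$ bound, with $\memgradbound = \tilde{O}(\CostGradBound \PertBound^2)$ inherited from \assumptionref{a2} along the lines of \eqref{eq:membounds}.

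The remaining ingredient is the OCO-with-memory overhead \cite{anava2015online}, which adds $L\sum_{t}\sum_{j=1}^{\hor}\|M_{t-j}-M_t\|$. Because $\|M_{t+1}-M_t\| \leq \eta_t\|\bH^{-1} g_t\| \leq \eta_t/\mu$ along natural-gradient iterates, this overhead telescopes into another $\tilde{O}(L\hor^2/(\scmaincost\mu) \cdot \log T)$ term that is absorbed into the final bound; the polylogarithmic-in-$T$ scaling of $L$, $\memgradbound$, and $\hor$ accumulates into the $\log^7 T$ factor exactly as in \theoremref{thm:main}.

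The step I expect to be the main obstacle is the clean identification in the preconditioned-strong-convexity step of the preconditioner used by the algorithm with the Hessian of $\bar{f}_t$ through the full memory structure of \definitionref{defn:idealcost}; any mismatch between $\mathbb{E}[J^\top J]$ and the true Hessian would reintroduce the worst-case conditioning of $\bH$ and forfeit the gain over \theoremref{thm:main}. A secondary subtlety is ensuring that the Euclidean projection $\Pi_\M$ is compatible with the preconditioned analysis carried out in the $\bH$-norm; a standard remedy is to replace it with a $\bH$-metric projection, at the cost of verifying its efficient computability over $\M$.
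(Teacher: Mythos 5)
Your proposal is correct and takes essentially the same route as the paper: the same reduction via Lemma~\ref{lem:mainreductionlemma}, the same preconditioned regret analysis in the $\mathbb{E}[J^{\top}J]$-geometry (the paper's Lemma~\ref{lem:joc} and Theorem~\ref{thm:oco_memory_2}, phrased there through first-order strong convexity of $c_t$ pulled back through $z=JM_{\text{vec}}$ rather than your Hessian chain rule), the same Cauchy--Schwarz step $\|\nabla_{M_{\text{vec}}} f_t(M)\|_{(\mathbb{E}[J^{\top}J])^{-1}}^2 \leq \mu^{-1}\|\nabla_{M_{\text{vec}}} f_t(M)\|$, and the same iterate-movement bound for the memory overhead. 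The projection subtlety you flag is genuine, but the paper's Lemma~\ref{lem:joc} glosses over it in exactly the way your proposed $\mathbb{E}[J^{\top}J]$-metric projection would repair, so it does not separate your argument from theirs.
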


\noindent In Theorem \ref{thm:main_2}, the regret guarantee depends on an instance-dependent parameter $\mu$, which is a measure of hardness of the problem. First, we note that the proof of Lemma \ref{lem:sc} establishes that the Gram matrix of the Jacobian (Defintion \ref{def:jacobian}) is strictly positive definite and hence we recover the logarithmic regret guarantee achieved by the Online Gradient Descent Update, with the constants preserved. 

\begin{corollary}\label{cor:sph}
In addition to the assumptions in Theorem~\ref{thm:main_2}, if $\fixedK$ is a $(\kappa, \gamma)$-diagonal strongly stable matrix, then for the natural gradient update
\[J_T(\mathcal{A}) - \min_{K\in \K}J_T(K) \leq \tilde{O}\left(\frac{\CostGradBound^2 \PertBound^4}{\scmaincost \mincovar}  \log^7(T)\right),\]
\end{corollary}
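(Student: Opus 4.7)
The strategy is to show that the diagonal strong stability of $\fixedK$ yields a quantitative positive-definite lower bound on $\mathbb{E}[J^T J]$, which converts the instance-dependent regret bound of Theorem \ref{thm:main_2} into the worst-case bound of Theorem \ref{thm:main}. First I would invoke the proof of Lemma \ref{lem:sc} in its stronger form: as noted in the text immediately following the lemma, the proof actually establishes the operator inequality $\mathbb{E}[J^T J] \succeq (\lambda/\scmaincost)\, I$ with $\lambda = \scmaincost \mincovar \gamma^2/(36\kappa^{10})$. This is natural since $z(M) = J M_{\text{vec}}$ is linear in $M$, so by the chain rule $\nabla^2 f_t(M) \succeq \scmaincost\, \mathbb{E}[J^T J]$, and the $\lambda$-strong convexity of $f_t$ established in Lemma \ref{lem:sc} comes precisely from a lower bound on the smallest eigenvalue of $\scmaincost\, \mathbb{E}[J^T J]$.

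Next I would translate this into a uniform upper bound on the instance parameter $\mu^{-1}$. Since $\|(\mathbb{E}[J^T J])^{-1}\| \leq \scmaincost/\lambda$, we have
\[
    \mu^{-1} \;=\; \max_{M \in \M}\, \|(\mathbb{E}[J^T J])^{-1} \nabla_{M_{\text{vec}}} f_t(M)\| \;\leq\; \frac{\scmaincost}{\lambda}\, \max_{M \in \M} \|\nabla f_t(M)\|.
\]
It remains to control the ordinary gradient of the surrogate cost. The constraint set $\M$ in Algorithm \ref{alg:mainA} enforces $\|M^{[i-1]}\| \leq \kappa^3 \BBound (1-\gamma)^i$, and combined with the geometric decay of $\tilde{A}^i$ supplied by diagonal strong stability, this bounds $\|y(M)\|, \|v(M)\|$ as well as $\|J\|$ by $\tilde{O}(\PertBound)$. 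The chain rule $\nabla_{M_{\text{vec}}} f_t(M) = \mathbb{E}[J^T \nabla c_t(z(M))]$ combined with $\|\nabla c_t\| \leq \CostGradBound \Diameter$ from Assumption \ref{a2} then yields $\max_M \|\nabla f_t(M)\| \leq \tilde{O}(\CostGradBound \PertBound^2)$.

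Substituting both bounds into Theorem \ref{thm:main_2} and absorbing the polynomial factors of $\kappa$ and $\gamma^{-1}$ into $\tilde{O}$ gives
\[
    J_T(\mathcal{A}) - \min_{K \in \K} J_T(K) \;\leq\; \tilde{O}\!\left(\frac{\CostGradBound \PertBound^2}{\scmaincost \mu} \log^7 T\right) \;\leq\; \tilde{O}\!\left(\frac{\CostGradBound^2 \PertBound^4}{\scmaincost \mincovar} \log^7 T\right),
\]
which matches the claimed bound. The main subtlety is the first step: one must extract the operator lower bound $\mathbb{E}[J^T J] \succeq (\lambda/\scmaincost) I$ from the argument of Lemma \ref{lem:sc} rather than merely the scalar strong convexity of $f_t$. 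Once this is in hand, the remaining Jacobian, state, and cost-gradient bounds are routine consequences of Assumptions \ref{ass:BW} and \ref{a2} combined with the geometric structure of $\M$.
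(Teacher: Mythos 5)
Your proposal is correct and follows essentially the same route as the paper: the paper's proof of Corollary~\ref{cor:sph} likewise combines the operator lower bound $\mathbb{E}[J^\top J] \succeq \frac{\gamma^2\sigma^2}{36\kappa^{10}} I$ (Lemma~\ref{lemma:mainjacob}, the core of the proof of Lemma~\ref{lem:sc}, not merely the scalar strong convexity statement) with the gradient bound of Lemma~\ref{lem:gdbound} to bound $\mu^{-1}$, and then invokes Theorem~\ref{thm:main_2}. Your re-derivation of $\max_M\|\nabla f_t(M)\| \leq \tilde{O}(\CostGradBound \PertBound^2)$ is just the content of Lemma~\ref{lem:gdbound}, so no new ideas are needed beyond what you wrote.
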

\begin{proof}
The conclusion follows from Lemma~\ref{lem:gdbound} and Lemma \ref{lemma:mainjacob} which is the core component in the proof of Lemma \ref{lem:sc} showing that $\mathbb{E}[J^T J] \geq  \frac{ \gamma^2\sigma^2}{36\kappa^{10}}  \cdot \mathbb{I}$ .
\end{proof}

\noindent Secondly, we note that, being instance-dependent, the guarantee the Natural Gradient update offers can potentially be stronger than that of the Online Gradient method. A case in point is the following corollary involving spherically symmetric quadratic costs, in which case the Natural Gradient update yields a regret guarantee under demonstrably more general conditions, in that the bound does not depend on the minimum eigenvalue of the covariance of the disturbances $\sigma^2$, unlike the one OGD affords \footnote{A more thorough analysis of the improvement in this case shows a multiplicative gain of $\frac{WDH\sqrt{d}\kappa^{10}}{\sigma^2\gamma^2}$. Furthermore, Theorem~\ref{thm:main_2} and Corollary~\ref{cor:sph} hold more generally under strong stability of the comparator class and $\fixedK$, as opposed to diagonal strong stability.}.

\begin{corollary}\label{cor:1d}
Under the assumptions on Theorem~\ref{thm:main_2}, if the cost functions are of the form $c_t(x,u) = r_t (\|x\|^2+\|u\|^2)$, where $r_t\in [\alpha, \beta]$ is an adversarially chosen sequence of numbers and $\fixedK$ is chosen to be a $(\kappa, \gamma)$-diagonal strongly stable matrix, then the natural gradient update guarantees \[J_T(\mathcal{A}) - \min_{K\in \K}J_T(K) \leq \tilde{O}\left(\frac{\beta^2 \PertBound^2}{\scmaincost}  \log^7(T)\right),\]
\end{corollary}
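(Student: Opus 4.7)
The plan is to exploit the special structure of spherically symmetric quadratic costs, under which the natural gradient preconditioning exactly cancels the Gram matrix of the Jacobian. First, I would compute $\nabla_{M_{\text{vec}}} f_t(M)$ in closed form. For $c_t(x,u) = r_t(\|x\|^2 + \|u\|^2)$, the gradient in $z$-space is $\nabla_z c_t(z) = 2 r_t z$. Since $z(M) = J M_{\text{vec}}$ is random-linear in $M$ (Definition~\ref{def:jacobian}), the chain rule yields
\begin{equation*}
    \nabla_{M_{\text{vec}}} f_t(M) \;=\; \mathbb{E}\bigl[J^\top \nabla_z c_t(z(M))\bigr] \;=\; 2 r_t \, \mathbb{E}[J^\top J]\, M_{\text{vec}},
\end{equation*}
so that $(\mathbb{E}[J^\top J])^{-1}\nabla_{M_{\text{vec}}} f_t(M) = 2 r_t M_{\text{vec}}$. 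The preconditioner therefore never enters the natural gradient step, eliminating any reliance on a lower bound for $\mathbb{E}[J^\top J]$ and hence on $\mincovar$. Invertibility of the preconditioner is still required, and follows from Lemma~\ref{lemma:mainjacob} (as already invoked in Corollary~\ref{cor:sph}).

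Next, I would bound $\mu^{-1} = \max_{M \in \M}\|(\mathbb{E}[J^\top J])^{-1}\nabla_{M_{\text{vec}}} f_t(M)\|$ directly from the definition of $\M$. The constraint $\|M^{[i-1]}\| \leq \kappa^3 \BBound (1-\gamma)^i$ combined with a geometric summation gives $\|M_{\text{vec}}\|^2 \leq d^2 \kappa^6 \BBound^2/\gamma$. Together with $r_t \leq \beta$, this produces
\begin{equation*}
    \mu^{-1} \;\leq\; 2\beta\, \|M_{\text{vec}}\| \;\leq\; \frac{2\beta\, d\, \kappa^3 \BBound}{\sqrt{\gamma}},
\end{equation*}
which is polynomial in $d, \kappa, \BBound, 1/\gamma$ and linear in $\beta$.

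Finally, since $\|\nabla c_t(x,u)\| \leq 2\beta \max(\|x\|, \|u\|)$, Assumption~\ref{a2} is satisfied with $\CostGradBound = 2\beta$. Substituting $\CostGradBound$ and the bound on $\mu^{-1}$ above into the rate $\tilde{O}(\CostGradBound \PertBound^2/(\alpha\mu) \log^7 T)$ of Theorem~\ref{thm:main_2} gives
\begin{equation*}
    \tilde{O}\!\left(\frac{\CostGradBound \PertBound^2}{\alpha\,\mu} \log^7 T\right) \;\leq\; \tilde{O}\!\left(\frac{4 \beta^2 \PertBound^2\, d\, \kappa^3 \BBound}{\alpha\sqrt{\gamma}} \log^7 T\right) \;=\; \tilde{O}\!\left(\frac{\beta^2 \PertBound^2}{\alpha} \log^7 T\right),
\end{equation*}
where the polynomial factors in $d, \kappa, \BBound, 1/\gamma$ are absorbed into $\tilde{O}$, and crucially no $\mincovar$ dependence appears. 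The only real content is the first step; once the natural gradient is seen to reduce to the scalar multiple $2 r_t M_{\text{vec}}$ of the iterate, everything else is bookkeeping. The mild obstacle to anticipate is only the verification that spherical symmetry (equivalently, $\nabla^2 c_t$ being a scalar multiple of the identity on $z$-space) is exactly the structural property responsible for the cancellation, which is precisely the qualitative improvement this corollary showcases over the OGD guarantee.
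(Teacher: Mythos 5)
Your proposal is correct and takes essentially the same route as the paper: its one-line proof is exactly your key step, namely that for $c_t(x,u)=r_t(\|x\|^2+\|u\|^2)$ one has $\nabla_{M_{\text{vec}}} f_t(M) \propto r_t\,\mathbb{E}[J^\top J]M_{\text{vec}}$, so the preconditioner cancels and $\|(\mathbb{E}[J^\top J])^{-1}\nabla_{M_{\text{vec}}} f_t(M)\| \leq O(\beta\|M_{\text{vec}}\|)$ with no $\mincovar$ dependence. The rest of your argument (bounding $\|M_{\text{vec}}\|$ over $\M$, taking $\CostGradBound = O(\beta)$, and plugging into Theorem~\ref{thm:main_2} with the polynomial system factors absorbed into $\tilde{O}$) is precisely the bookkeeping the paper leaves implicit.
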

\begin{proof}
It suffices to note $\|\nabla_{M_{\text{vec}}} f_t(M)\|_{(\mathbb{E}[J^TJ])^{-2}}  = \|\mathbb{E}[J^T(r_t \cdot  I)JM_{\text{vec}}]\|_{(\mathbb{E}[J^TJ])^{-2}} \leq \beta \|M_{\text{vec}}\| $.
\end{proof}

\section{Reduction to Low Regret with Memory}
The next lemma is a condensation of the results from \cite{agarwal2019online} which we present in this form to highlight the reduction to OCO with memory. It shows that achieving low policy regret on the memory based function $f_t$ is sufficient to ensure low regret on the overall dynamical system. Since the proof is essentially provided by \cite{agarwal2019online}, we provide it in the Appendix for completeness. Define, \[\M \defeq \{M = \{M^{[0]} \ldots M^{[\hor-1]}\}: \|M^{[i-1]}\| \leq \kappa^3 \BBound (1 - \gamma)^i\}.\]

\begin{lemma}
\label{lem:mainreductionlemma}
Let the dynamical system satisfy Assumption \ref{ass:BW} and let $\fixedK$ be any $(\kappa,\gamma)$-diagonal strongly stable matrix. Consider a sequence of loss functions $c_t(x,u)$ satisfying Assumption \ref{a2} and a sequence of policies $M_0 \ldots M_T$ satisfying
\begin{equation*}
    \texttt{PolicyRegret} =\sum_{t=0}^T f_t(M_{t-H-1:t}) - \min_{M \in \M} \sum_{t=0}^{T} f_t(M) \leq R(T)
\end{equation*} 
for some function $R(T)$ and $f_t$ as defined in Definition \ref{defn:idealcost}. Let $A$ be an online algorithm that plays the non-stationary controller sequence $\{M_0,\dots M_T\}$. Then as long as $H$ is chosen to be larger than $\gamma^{-1}\log(T\kappa^2)$ we have that
\begin{align*}
  J(A) - \min_{K^*\in \K}J(K^*)  \leq R(T) + O(GW^2\log(T)),
\end{align*}
Here $O(\cdot)$, $\Theta(\cdot)$ contain polynomial factors in  $\gamma^{-1}, \BBound, \kappa, \dimn$. 
\end{lemma}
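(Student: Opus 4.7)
The plan is to split the total regret into three pieces by inserting and subtracting the surrogate-cost sum along the executed trajectory. Writing
\[J_T(\A) - \min_{K \in \K} J_T(K) = \underbrace{\Bigl(J_T(\A) - \sum_{t=0}^T f_t(M_{t-H-1:t})\Bigr)}_{\text{(I): on-path truncation}} + \underbrace{\Bigl(\sum_{t=0}^T f_t(M_{t-H-1:t}) - \min_{M \in \M}\sum_{t=0}^T f_t(M)\Bigr)}_{\text{(II): policy regret} \,\le\, R(T)} + \underbrace{\Bigl(\min_{M \in \M}\sum_{t=0}^T f_t(M) - \min_{K \in \K} J_T(K)\Bigr)}_{\text{(III): comparator gap}},\]
the middle term is precisely $R(T)$ by hypothesis, so it suffices to bound (I) and (III) by $O(GW^2\log T)$ each. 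The choice $H=\gamma^{-1}\log(T\kappa^2)$ is exactly what forces the strong-stability tails to scale as $1/T$, and the exponentially decaying norm envelope $\|M^{[i-1]}\|\le \kappa^3 \BBound (1-\gamma)^i$ built into $\M$ is exactly what allows the comparator approximation used in (III).

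For (I), I would unroll the executed state using the closed-loop recursion $x_{s+1} = \tilde{A}x_s + B\sum_i M_s^{[i-1]} w_{s-i} + w_s$, with $\tilde{A}=A-B\fixedK$. Diagonal strong stability of $\fixedK$ yields $\|\tilde{A}^j\|\le \kappa^2(1-\gamma)^j$, so truncating the expansion to the $2H$ most recent disturbances incurs error of order $\kappa^{O(1)}W(1-\gamma)^H/\gamma = O(W/T)$ in $\|x_t\|$, and similarly in $\|u_t\|$. Because the $2H$ fresh draws used in $f_t$ are i.i.d. copies of $w$, the truncated state/action has the same law as $y(M_{t-H-1:t}), v(M_{t-H-1:t})$, so the surrogate expectations agree with the truncated actual ones. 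Assumption \ref{a2} then converts the $O(W/T)$ state difference into an $O(GW \cdot D/T)$ per-step discrepancy in $c_t$, where $D=\poly(\kappa,\BBound,\gamma^{-1})W$ is the a priori state bound under strong stability; summing over $t$ delivers the $O(GW^2\log T)$ contribution.

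For (III), I would construct a comparator $M(K)\in \M$ that mimics each linear controller $K\in \K$ in the disturbance-action parameterization. Writing $u_t=-Kx_t=-\fixedK x_t + (\fixedK-K)x_t$ and expanding $x_t = \sum_{i\ge 1}(A-BK)^{i-1}w_{t-i}$ under $K$ suggests the choice $M(K)^{[i-1]}=(\fixedK-K)(A-BK)^{i-1}$, which reproduces the $K$-action exactly when no truncation is done. Diagonal strong stability of $K$ gives $\|(A-BK)^{i-1}\|\le \kappa^2(1-\gamma)^{i-1}$, so $\|M(K)^{[i-1]}\|\le 2\kappa^3 (1-\gamma)^{i-1}$, which sits inside the envelope of $\M$ upon absorbing the factor of $2$ into $\BBound\ge 1$. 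Truncating beyond $H$ terms again loses only $O(W/T)$ per step, and the same Lipschitz argument as in (I) yields $\sum_t f_t(M(K)) \le J_T(K) + O(GW^2\log T)$; taking the minimum over $K$ closes term (III).

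The main obstacle is propagating the geometric-tail estimates uniformly across three very different objects on a common footing: the on-path state under an arbitrary non-stationary sequence $M_{0:T}\in \M^{T+1}$, the fresh $2H$-step surrogate state, and the state trajectory of a fixed linear comparator $K$. One must also verify that the constructed $M(K)$ lies inside $\M$ with the stated constants, not merely up to an unspecified factor, and handle the first $O(H)$ boundary steps separately to avoid losing a polynomial factor. The enabling ingredients — strong stability giving $\|\tilde{A}^j\|, \|(A-BK)^j\| \le \kappa^2(1-\gamma)^j$, the exponentially decaying norm profile of $\M$, linearity of $\Psi_i$ in $M$, and the Lipschitz bound of Assumption \ref{a2} — are all in place, and the remaining work is the careful bookkeeping inherited from \cite{agarwal2019online}.
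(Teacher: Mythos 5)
Your proposal is correct and follows essentially the same route as the paper: the paper's proof likewise splits the regret into the policy-regret term plus the surrogate-vs-actual discrepancy along the played sequence (its Theorem~\ref{thm:approxthm}) and the comparator gap handled via the construction $M_*^{[i]}=(\fixedK-K^*)(A-BK^*)^{i}\in\M$ (its Lemma~\ref{l:repstat}), with $H\ge\gamma^{-1}\log(T\kappa^2)$ making the geometric tails $O(1/T)$. The only difference is that you sketch proofs of those two approximation facts, whereas the paper imports them directly from \cite{agarwal2019online}.
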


\begin{lemma}
  \label{lem:gdbound} The function $f_t$ as defined in Definition~\ref{defn:idealcost} is coordinate-wise $L$-lipschitz and the norm of the gradient is bounded by $G_f$, where
  \[
    L = \frac{2DGW\kappa_B\kappa^3}{\gamma}, \;\; G_f\leq GDWHd\left(H+\frac{2\kappa_B\kappa^3}{\gamma}\right) \]\[\text{where } \BigBound \triangleq \frac{W\kappa^2(1 + \hor\BBound^2 \kappa^3)}{\gamma ( 1 - \kappa^2(1 - \gamma)^{\hor+1})} + \frac{\BBound \kappa^3 W}{\gamma}.\]
\end{lemma}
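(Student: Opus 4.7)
\textbf{Proof plan for Lemma \ref{lem:gdbound}.}

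The first step is to establish the global state/action bound $\|y(M)\|, \|v(M)\| \le D$ for every $M \in \M$, since the cost's Lipschitz bound from Assumption~\ref{a2} only holds on this bounded regime. Using $\tilde A = A - B\fixedK = QLQ^{-1}$ with $\|L\| \le 1-\gamma$ and $\|Q\|,\|Q^{-1}\| \le \kappa$, we get $\|\tilde A^j\| \le \kappa^2(1-\gamma)^j$. Combined with the constraint $\|M^{[i-1]}\| \le \kappa^3\BBound(1-\gamma)^i$ defining $\M$, a direct bound on each summand in the definition of $\Psi_i$ gives $\|\Psi_i(M)\| \le \kappa^2(1-\gamma)^i \mathbf{1}_{i\le H} + H\kappa^5\BBound^2(1-\gamma)^i$. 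Summing over $i \in \{0,\ldots,2H\}$ using $\sum_i (1-\gamma)^i \le 1/\gamma$ (and a tighter finite-$H$ correction accounting for the $\mathbf 1_{i\le H}$ cutoff) yields the first term of $D$ bounding $\|y(M)\| \le W\sum_i \|\Psi_i\|$; the bound $\|v(M)\| \le \kappa \|y(M)\| + W \sum_{i=1}^H \|M_{H+1}^{[i-1]}\|$ adds the second $\BBound\kappa^3 W/\gamma$ piece.

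For coordinate-wise Lipschitzness, fix an index $j \in \{0,\ldots,H+1\}$ and perturb the $j$-th argument $M_j \to \tilde M_j$. The key structural observation is that, by inspection of $\Psi_i(M_{0:H})$, the matrix $M_j$ appears only in the single summand with $j' = H-j$; hence only the $H$ values $i \in [H-j+1, 2H-j]$ have $\Psi_i(M) \ne \Psi_i(\tilde M)$, and for each such $i$ the difference is $\tilde A^{H-j} B\,(M_j^{[i-H+j-1]} - \tilde M_j^{[i-H+j-1]})$. Therefore
\[
\|y(M)-y(\tilde M)\| \;\le\; \|\tilde A^{H-j}\|\,\BBound \,W\!\!\sum_{l=0}^{H-1}\!\|M_j^{[l]} - \tilde M_j^{[l]}\| \;\le\; \kappa^2\BBound W \,\|M_j - \tilde M_j\|,
\]
where we use the decay $\|\tilde A^{H-j}\| \le \kappa^2$ and collapse the block-wise sum via the chosen norm on the stacked matrix $M_j$. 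The change in $v$ is $-\fixedK$ times the change in $y$ (for $j \le H$), contributing an extra factor of $\kappa$; while for $j = H+1$ it is a direct sum $\sum_i \Delta M_{H+1}^{[i-1]} w_{2H-i}$ of the same form. Using that $c_t$ is $GD$-Lipschitz on the set $\{(x,u):\|x\|,\|u\|\le D\}$ (Assumption~\ref{a2}) and combining via the chain rule, $|f_t(\ldots M_j\ldots) - f_t(\ldots \tilde M_j\ldots)| \le GD(\|\Delta y\|+\|\Delta v\|) \le L\|M_j-\tilde M_j\|$ with $L=2DGW\BBound\kappa^3/\gamma$ after collecting the geometric factor.

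For the gradient bound $G_f$, note that $f_t(M) = f_t(M,\ldots,M)$ and so $\grad f_t(M) = \sum_{j=0}^{H+1} \grad_{M_j} f_t(M,\ldots,M)$ by the chain rule. Each partial gradient $\grad_{M_j} f_t$ is bounded via the same linearity-of-$y,v$ calculation as above: summing the contributions $\|\grad_{M_j^{[l]}}f_t\|$ across $O(H)$ block indices $l$ and $O(H)$ coordinate indices $j$, and accounting for the ambient matrix dimension $d = \max(d_x,d_u)$, yields $\|\grad f_t(M)\| \le GDWHd(H + 2\BBound\kappa^3/\gamma)$; the $H$ term inside the parentheses stems from the direct $M_{H+1}$-dependence of $v$ (which does not get a $\tilde A^j$ decay), while the $\BBound\kappa^3/\gamma$ term absorbs the geometric sum from the indirect dependence of $(y,v)$ on $M_j$ for $j\le H$ through the $\tilde A$-powered transfer matrices.

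The main bookkeeping obstacle is cleanly matching the index $j'=H-j$ in the sum defining $\Psi_i$ to the argument $M_j$ being perturbed, and then ensuring that the resulting geometric factor $(1-\gamma)^{H-j}$ is absorbed into the worst-case Lipschitz bound (rather than producing a spurious dependence on $j$) by taking the supremum uniformly over $j$. Everything else is a routine application of the triangle inequality together with the exponential decays of $\tilde A^i$ and the policy constraint on $M^{[i-1]}$.
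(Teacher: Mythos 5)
Your proposal is essentially the intended argument: the paper itself omits the proof of Lemma~\ref{lem:gdbound}, deferring to the analogous lemma of \cite{agarwal2019online}, and that proof proceeds exactly as you do --- bound $\|y(M)\|,\|v(M)\|\le D$ over $\M$ via $\|\tilde A^j\|\le\kappa^2(1-\gamma)^j$ and the decay constraint on the $M^{[i]}$, isolate the single summand of $\Psi_i$ that contains the perturbed argument $M_j$ (the $j'=H-j$ term), and apply the $GD$-Lipschitzness of $c_t$ from Assumption~\ref{a2} together with linearity of $y,v$ in $M$. The only soft spot is the step where you ``collapse the block-wise sum via the chosen norm'': the conversion from $\sum_l\|\Delta M_j^{[l]}\|$ to $\|M_j-\tilde M_j\|$ (and, for the gradient, from per-block operator-norm bounds to a Euclidean bound on the stacked gradient) is precisely where the $1/\gamma$ in $L$ and the $Hd$ factor in $G_f$ come from, so your constants match the stated ones only up to such norm/dimension bookkeeping --- harmless here, since the final regret statement absorbs these $\mathrm{poly}(H,d,\gamma^{-1})$ factors.
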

\noindent The proof of this lemma is identical to the analogous lemma in \cite{agarwal2019online} and hence is omitted.
 
\section{Analysis for Online Gradient Descent}
In the setting of Online Convex Optimization with Memory, as shown by \cite{anava2015online}, by running a memory-based OGD, we can bound the policy regret by the following theorem.
\begin{theorem}\label{thm:oco_memory}
Consider the OCO with memory setting defined in Section \ref{sec:oco_mem}. Let $\braces{f_t}_{t=\hor}^T$ be Lipschitz loss functions with memory such that $f_t(x)$ are $\lambda$-strongly convex, and let $L$ and $\memgradbound$ be as defined in \eqref{eq:memlip} and \eqref{eq:membounds}. Then, there exists an algorithm which generates a sequence $\braces{x_t}_{t=0}^T$ such that
\begin{equation*}
    \sum\limits_{t=\hor}^T f_t(x_{t-\hor:t}) - \min\limits_{x \in \K} \sum\limits_{t=\hor}^T \tilde{f}_t(x) \leq \frac{G_f^2 + L\hor^2G_f}{\lambda}(1 + \log(T)).  
\end{equation*}
\end{theorem}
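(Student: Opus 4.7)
The plan is to run Online Gradient Descent on the memoryless surrogate $\tilde{f}_t(x) = f_t(x,\dots,x)$ with the standard strongly-convex step-size schedule $\eta_t = 1/(\lambda t)$, and then pay a controllable movement cost to account for the memory. Concretely, the algorithm plays $x_{t+1} = \Pi_{\K}\bigl(x_t - \eta_t \nabla \tilde{f}_t(x_t)\bigr)$, and the regret decomposition I would use is
\begin{equation*}
\sum_{t=H}^{T} f_t(x_{t-H:t}) - \min_{x\in\K}\sum_{t=H}^{T}\tilde{f}_t(x)
= \underbrace{\sum_{t=H}^{T}\bigl(f_t(x_{t-H:t}) - \tilde{f}_t(x_t)\bigr)}_{\text{memory overhead}}
+ \underbrace{\sum_{t=H}^{T}\tilde{f}_t(x_t) - \min_{x\in\K}\sum_{t=H}^{T}\tilde{f}_t(x)}_{\text{standard OGD regret}}.
\end{equation*}

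For the second term I would quote the textbook analysis of OGD on $\lambda$-strongly convex losses with $\|\nabla \tilde{f}_t\| \le G_f$ and step size $\eta_t = 1/(\lambda t)$, which yields a bound of $\tfrac{G_f^2}{2\lambda}(1+\log T)$. The only subtlety is that $\tilde{f}_t$ inherits strong convexity and gradient bound from $f_t$ as specified in \eqref{eq:membounds}, so this step is immediate.

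The memory overhead is the place where the coordinate-wise Lipschitz hypothesis \eqref{eq:memlip} does the work. Applying \eqref{eq:memlip} coordinate-by-coordinate in a telescoping manner gives
\begin{equation*}
\bigl|f_t(x_{t-H:t}) - \tilde{f}_t(x_t)\bigr|
\;\le\; L \sum_{i=1}^{H}\|x_t - x_{t-i}\|
\;\le\; L \sum_{i=1}^{H}\sum_{j=1}^{i}\|x_{t-j+1}-x_{t-j}\|.
\end{equation*}
By the non-expansiveness of the projection and the gradient bound, $\|x_{s+1}-x_s\| \le \eta_s G_f \le G_f/(\lambda s)$. Summing first over $i$ (which contributes a factor of at most $H$), then over $j$ (another factor of $H$), and finally over $t$ from $H$ to $T$, one obtains $\sum_t L \cdot H^2 \cdot G_f/(\lambda t) \le \tfrac{L H^2 G_f}{\lambda}(1+\log T)$. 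Adding the two terms yields the claimed bound $\tfrac{G_f^2 + LH^2 G_f}{\lambda}(1+\log T)$.

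The main obstacle is organizing the iterate-movement bound so that the extra $H^2$ factor is not hidden in constants: one must carefully account for the fact that bounding $\|x_t - x_{t-i}\|$ by a telescoping sum of per-step moves introduces one factor of $H$, while summing over the $H$ coordinates in \eqref{eq:memlip} introduces another. Everything else is a direct transcription of Zinkevich-style OGD analysis in the strongly convex regime, as in \cite{anava2015online}, with the only difference being the use of $\eta_t = 1/(\lambda t)$ in place of a $1/\sqrt{t}$ schedule.
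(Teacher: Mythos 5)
Your proposal is correct and follows essentially the same route as the paper: run OGD on the unary functions $\tilde{f}_t$ with the strongly convex step size, invoke the standard $\frac{G_f^2}{2\lambda}(1+\log T)$ bound, and control the memory overhead by telescoping the coordinate-wise Lipschitz condition \eqref{eq:memlip} against the per-step iterate movement $\|x_{s+1}-x_s\|\leq \eta_s G_f$, yielding the extra $\frac{LH^2G_f}{\lambda}(1+\log T)$ term. The only (immaterial) difference is the step-size indexing, $\eta_t = 1/(\lambda t)$ versus the paper's $\eta_t = (\lambda(t-\hor))^{-1}$.
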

\noindent We provide the requisite algorithm and the proof of the above theorem in the Appendix. 

\paragraph{Specialization to the Control Setting:} We combine bound the above with the listed reduction.

\begin{proof}[Proof of Theorem~\ref{thm:main}]
Setting $H=\gamma^{-1}\log(T\kappa^2)$, Theorem~\ref{thm:oco_memory}, in conjunction with Lemma~\ref{lem:gdbound}, implies that policy regret is bounded by $\tilde{O}\left(\frac{G^2W^4H^6}{\alpha \sigma^2} \log T\right)$. An invocation of Lemma~\ref{lem:mainreductionlemma} now suffices to conclude the proof of the claim.
\end{proof}

\section{Analysis for Online Natural Gradient Descent}

In this section, we consider structured loss functions of the form $f_t(M_{0:H+1})=\mathbb{E}[c_t(z)]$, where $z=\sum_{i=0}^{H+1} J_i[M_{i}]_{\text{vec}}$. $J_i$ is a random matrix, and $c_t$'s are adversarially chosen strongly convex loss functions. In a similar vein, define $f_t(M)$ to be the specialization of $f_t$ when input the same argument, i.e. $M$, $H+1$ times. Define $J=\sum_{i=0}^{H+1} J_i$. 

The following lemma  provides upper bounds on the regret bound as well as the norm of the movement of iterate at every round for the Online Natural Gradient Update (Algorithm \ref{alg:mainA}).

\begin{lemma}
\label{lem:joc}
For $\alpha$-strongly convex $c_t$, if the iterates $M_t$ are chosen as per the update rule:
\[ [M_{t+1}]_{\text{vec}} = \Pi_{\mathcal{M}}\left({[M_t]_{\text{vec}} - \eta_t (\mathbb{E} [J^TJ])^{-1} \nabla_{[M_t]_{\text{vec}}} f_t(M_t)}\right)\]
with a decreasing step size of $n_t = \frac{1}{\alpha t}$, it holds that
\[\sum_{t=1}^T f_t(M_t)-\min_{M^*\in \mathcal{M}}\sum_{t=1}^Tf_t(M^*) \leq (2\alpha)^{-1} \max_{M\in \mathcal{M}} \|\nabla_{M_{\text{vec}}} f_t(M)\|_{(\mathbb{E}[J^TJ])^{-1}}^2\log T.\]
Moreover, the norm of the movement of consecutive iterates is bounded for all $t$ as
\[ \|[M_{t+1}]_{\text{vec}}-[M_{t}]_{\text{vec}}\| \leq (\alpha t)^{-1} \max_{M\in\mathcal{M}}\|(\mathbb{E} [J^TJ])^{-1}\nabla_{M_{\text{vec}}} f_t(M)\|.\]
\end{lemma}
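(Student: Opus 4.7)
The plan is to mirror the classical $O(\log T)$ regret analysis of online gradient descent on strongly convex losses, but with the Euclidean geometry replaced by the Mahalanobis geometry induced by the preconditioner $P := \mathbb{E}[J^TJ]$. Writing $g_t := \nabla_{M_{\text{vec}}} f_t(M_t)$, the update reads $[M_{t+1}]_{\text{vec}} = \Pi_{\mathcal{M}}([M_t]_{\text{vec}} - \eta_t P^{-1} g_t)$, which is the natural-gradient analogue of projected OGD, and should be analyzed in the primal potential $\|\cdot\|_P^2$ with dual norm $\|\cdot\|_{P^{-1}}$.

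The key structural step is to show that $f_t$ is $\alpha$-strongly convex in the $P$-norm, rather than merely in Euclidean norm. Using $f_t(M) = \mathbb{E}[c_t(JM_{\text{vec}})]$, the $\alpha$-strong convexity of $c_t$ applied pointwise to $z = JM_{\text{vec}}$ and $z' = JM'_{\text{vec}}$, followed by taking expectation over $J$, gives
\[ f_t(M') \geq f_t(M) + \langle \nabla f_t(M), (M'-M)_{\text{vec}}\rangle + \frac{\alpha}{2}\|M'-M\|_P^2, \]
upon identifying $\nabla f_t(M) = \mathbb{E}[J^\top\nabla c_t(JM_{\text{vec}})]$ and $\mathbb{E}\|J(M'-M)_{\text{vec}}\|^2 = \|M'-M\|_P^2$. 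Taking $D_t := \|M_t - M^*\|_P^2$ with $M^*$ the offline optimum, and using non-expansion of the projection in the $P$-norm together with the expansion of $\|M_t - \eta_t P^{-1}g_t - M^*\|_P^2$, one obtains
\[ D_{t+1} \leq D_t - 2\eta_t\langle g_t, M_t - M^*\rangle + \eta_t^2 \|g_t\|_{P^{-1}}^2. \]
Combining with the strong-convexity inequality and rearranging yields
\[ f_t(M_t) - f_t(M^*) \leq \frac{D_t - D_{t+1}}{2\eta_t} - \frac{\alpha}{2}D_t + \frac{\eta_t}{2}\|g_t\|_{P^{-1}}^2. \]
Plugging $\eta_t = 1/(\alpha t)$, so $1/\eta_t - 1/\eta_{t-1} = \alpha$, the first two terms telescope by Abel summation, and the third sums to at most $\tfrac{\log T}{2\alpha}\max_{M \in \mathcal{M}}\|\nabla f_t(M)\|_{P^{-1}}^2$, yielding the claimed regret bound.

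The movement bound is immediate from the non-expansion of projection applied to the update step: $\|M_{t+1} - M_t\| \leq \eta_t\|P^{-1}g_t\|$, which, maximized over $M \in \mathcal{M}$, gives the stated form. The main obstacle is the structural step above: showing that the Jacobian-reparameterization preserves strong convexity even though $JM_{\text{vec}}$ is rank-deficient for any particular realization of $J$. It is only after averaging through $P = \mathbb{E}[J^TJ]$ that positive-definiteness is restored, and the inherited strong convexity lives in the $P$-norm rather than Euclidean norm; once this is in place, the rest is the textbook strongly-convex OGD analysis conducted in the preconditioned geometry.
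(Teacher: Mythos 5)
Your proposal is correct and follows essentially the same route as the paper: both arguments establish that the $\alpha$-strong convexity of $c_t$ in the $z=JM_{\text{vec}}$ variable transfers, after taking expectation over $J$, to $\alpha$-strong convexity of $f_t$ in the $\|\cdot\|_{\mathbb{E}[J^\top J]}$ norm (via the chain-rule identity $\nabla f_t = \mathbb{E}[J^\top\nabla c_t]$ and $\mathbb{E}\|J\Delta\|^2=\|\Delta\|_P^2$), and then run the textbook strongly-convex OGD telescoping analysis in the preconditioned geometry with step size $\eta_t=1/(\alpha t)$, the movement bound following from non-expansiveness of the projection exactly as you state.
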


The following theorem now bounds the total for the online game with memory. 

\begin{theorem}\label{thm:oco_memory_2}
In the setting desribed in this subsection, let $c_t$ be $\alpha$-strongly convex, and $f_T$ be such that it satisfies equation \eqref{eq:memlip} with constant $L$, and $G_f=\max_{M\in\mathcal{M}}\|(\mathbb{E} [J^TJ])^{-1}\nabla_{M_{\text{vec}}} f_t(M)\|$. Then, the online natural gradient update generates a sequence $\braces{M_t}_{t=0}^T$ such that
\begin{equation*}
    \sum\limits_{t=\hor}^T f_t(M_{t-\hor:t}) - \min\limits_{M \in \mathcal{M}} \sum\limits_{t=\hor}^T \tilde{f}_t(M) \leq \frac{\max_{M\in \mathcal{M}} \|\nabla_{M_{\text{vec}}} f_t(M)\|_{(\mathbb{E}[J^TJ])^{-1}}^2 + L\hor^2G_f}{\alpha}(1 + \log(T)).  
\end{equation*}
\end{theorem}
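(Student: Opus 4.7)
The approach is the standard OCO-with-memory reduction of \cite{anava2015online}, with the natural-gradient version of static regret (Lemma~\ref{lem:joc}) plugged in for the inner piece. Split the policy regret as
\[
\sum_{t=H}^T f_t(M_{t-H:t}) - \min_{M\in\mathcal{M}}\sum_{t=H}^T \tilde f_t(M) \;=\; \underbrace{\sum_{t=H}^T \bigl(\tilde f_t(M_t) - \tilde f_t(M^\star)\bigr)}_{(\mathrm I)} \;+\; \underbrace{\sum_{t=H}^T \bigl(f_t(M_{t-H:t}) - \tilde f_t(M_t)\bigr)}_{(\mathrm{II})},
\]
where $M^\star \in \arg\min_{M\in\mathcal{M}}\sum_{t=H}^T\tilde f_t(M)$. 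Term~(I) is exactly the static regret of the natural-gradient iterates on the $\alpha$-strongly convex sequence $\tilde f_t$ (the unified-argument specialization of $f_t$ as in Definition~\ref{defn:idealcost}), so the first conclusion of Lemma~\ref{lem:joc} bounds it by $(2\alpha)^{-1}\max_{M\in\mathcal{M}}\|\nabla \tilde f_t(M)\|_{(\mathbb{E}[J^\top J])^{-1}}^2\log T$.

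For term~(II), I would morph the argument list $(M_{t-H},\dots,M_t)$ into $(M_t,\dots,M_t)$ by swapping a single coordinate at a time and invoking the coordinate-wise Lipschitz property \eqref{eq:memlip}, yielding
\[|f_t(M_{t-H:t}) - \tilde f_t(M_t)| \;\leq\; L\sum_{i=1}^H \|M_{t-i} - M_t\|.\]
The per-step movement bound in the second conclusion of Lemma~\ref{lem:joc}, together with the triangle inequality, then gives
\[\|M_{t-i}-M_t\| \;\leq\; \sum_{j=t-i}^{t-1}\|M_{j+1}-M_j\| \;\leq\; \sum_{j=t-i}^{t-1}\frac{G_f}{\alpha j} \;\leq\; \frac{i\,G_f}{\alpha\,(t-H)},\]
so each summand of (II) is at most $L H^2 G_f/(\alpha(t-H))$; summing across $t$ from $H$ to $T$ produces $(\mathrm{II})\leq (L H^2 G_f/\alpha)(1+\log T)$. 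Adding (I) and (II) yields the claimed bound.

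The main obstacle is the drift estimate: one must use the decaying $1/(\alpha j)$ step schedule rather than the worst-case upper bound $G_f/\alpha$ on each increment, so that the total memory cost scales only as $H^2 G_f\log T/\alpha$ instead of $T\cdot H G_f/\alpha$. Once this telescoping is accounted for, the rest is precisely the OCO-with-memory template applied in the natural-gradient geometry induced by $\mathbb{E}[J^\top J]$.
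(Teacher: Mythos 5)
Your proposal is correct and follows essentially the same route as the paper: decompose policy regret into the static regret of $\tilde f_t$ (bounded by the first conclusion of Lemma~\ref{lem:joc}) plus the memory term, which is controlled via the coordinate-wise Lipschitz property \eqref{eq:memlip} and the per-step movement bound with the decaying step size $\eta_t = (\alpha t)^{-1}$, giving the $LH^2 G_f \alpha^{-1}(1+\log T)$ contribution. The drift telescoping you flag as the key point is exactly what the paper's proof does.
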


\begin{proof}[Proof of Theorem \ref{thm:oco_memory_2}]
We know by \eqref{eq:memlip} that, for any $t \geq H$,
\begin{align*}
\abs{f_t(M_{t-H:t}) - f_t(M)} &\leq L\sum\limits_{j=1}^H\norm{[M_t]_{\text{vec}} - [M_{t-j}]_{\text{vec}}} \leq L\sum\limits_{j=1}^H\sum\limits_{l=1}^j\norm{[M_{t-l+1}]_{\text{vec}} - [M_{t-l}]_{\text{vec}}}\\
&\leq L\sum\limits_{j=1}^H\sum\limits_{l=1}^j \eta_{t-l}\max_{M\in\mathcal{M}}\|(\mathbb{E} [J^TJ])^{-1}\nabla_{M_{\text{vec}}} f_t(M)\| \\
&\leq LH^2\eta_{t-\hor} \max_{M\in\mathcal{M}}\|(\mathbb{E} [J^TJ])^{-1}\nabla_{M_{\text{vec}}} f_t(M)\|,
\end{align*}
and so we have that
\begin{equation*}
\abs{\sum\limits_{t=H}^T f_t(M_{t-H:t}) - \sum\limits_{t=H}^T f_t(M_t)} \leq \frac{LH^2 G_f}{\alpha}(1 + \log(T)).
\end{equation*}
The result follows by invoking Lemma~\ref{lem:joc}.
\end{proof}

\paragraph{Specialization to the Control Setting:} We combine bound the above with the listed reduction.

\begin{proof}[Proof of Theorem~\ref{thm:main_2}]
First observe that $\|\nabla_{M_{\text{vec}}} f_t(M)\|_{(\mathbb{E}[J^TJ])^{-1}}^2 \leq \mu^{-1} \|\nabla_{M_{\text{vec}}} f_t(M)\|$. Setting $H=\gamma^{-1}\log(T\kappa^2)$, Theorem~\ref{thm:oco_memory_2}, in conjunction with Lemma~\ref{lem:gdbound}, imply the stated bound on policy regret. An invocation of Lemma~\ref{lem:mainreductionlemma} suffices to conclude the proof of the claim.
\end{proof}

\section{Proof of Strong Convexity in simpler cases}
\label{sec:sc1dim}
In this section we illustrate the proof of strong convexity of the function $f_t(M)$ with respect to $M$, i.e. Lemma~\ref{lem:sc}, in two settings.
\begin{enumerate}
    \item The case when $\fixedK = 0$ is a diagonal strongly stable policy. 
    \item A specialization of Lemma \ref{lem:sc} to one-dimensional state and one-dimensional control.
\end{enumerate}
This latter case highlights the difficulty caused in the proof due to a choosing a non-zero $\fixedK$ and presents the main ideas of the proof without the tedious tensor notations necessary for the general case.

We will need some definitions and preliminaries that are outlined below. By definition we have that $f_t(M) = \mathbb{E}[c_t(y_t(M),v_t(M))]$. Since we know that $c_t$ is strongly convex we have that
\[ \nabla^2 f_t(M) = \mathbb{E}_{\{w_k\}_{k=0}^{2H-1}}[\nabla^2 c_t(y(M), v(M))] \succeq \alpha  \mathbb{E}_{\{w_k\}_{k=0}^{2H-1}}[J_{y}^{\top}J_y + J_{v}^{\top}J_v]. \]
We remind the reader that $J_y, J_v$ are random matrices dependent on the noise vectors $\{w_k\}_{k=0}^{2H-1}$. In each of the above cases, we will demonstrate the truth of the following lemma implying Lemma \ref{lem:sc}.

\begin{lemma}
\label{lemma:mainjacob}
If Assumption \ref{ass:BW} is satisfied and $\fixedK$ is chosen to be a $(\kappa, \gamma)$-diagonal strongly stable matrix, then the following holds,
\begin{align*}
    \mathbb{E}_{\{w_k\}_{k=0}^{2H-1}}[J_y^{\top}J_y + J_{v}^{\top}J_v] \succeq  \frac{ \gamma^2 \sigma^2 }{36\kappa^{10}}  \cdot \mathbb{I}.
\end{align*}
\end{lemma}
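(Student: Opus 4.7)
The plan is to exploit the affine decomposition of the surrogate action relative to the surrogate state. Directly from the definition, $v(M) = -\fixedK\, y(M) + \sum_{i=1}^H M^{[i-1]} w_{2H-i}$. Subtracting the same identity at $M=0$ (where the policy term vanishes) yields the key relation
\[
J_v M_{\text{vec}} \;+\; \fixedK\, J_y M_{\text{vec}} \;=\; \sum_{i=1}^H M^{[i-1]}\, w_{2H-i}.
\]
While neither $J_y M_{\text{vec}}$ nor $J_v M_{\text{vec}}$ is individually easy to lower-bound (each inherits cross-correlations from the $\tilde{A}^j B M^{[k]} w_\tau$ terms across many noise indices), this specific combination peels away the $\tilde{A}$-structure entirely and isolates the ``bare'' policy-times-noise contributions, on which the independence of the $w_\tau$'s can be used directly.

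First I would lower-bound the expected squared norm of the identity's right-hand side. The noises $\{w_{2H-i}\}_{i=1}^H = \{w_H,\dots,w_{2H-1}\}$ are mutually independent, zero-mean, and satisfy $\mathbb{E}[w w^\top]\succeq \sigma^2 I$ by Assumption~\ref{ass:BW}. Hence all cross-terms vanish in expectation and the norm decouples:
\[
\mathbb{E}\Big\|\sum_{i=1}^H M^{[i-1]} w_{2H-i}\Big\|^2 = \sum_{i=1}^H \mathrm{tr}\!\big(M^{[i-1]}\,\mathbb{E}[w w^\top]\,(M^{[i-1]})^\top\big) \;\geq\; \sigma^2\sum_{i=1}^H \|M^{[i-1]}\|_F^2 = \sigma^2 \|M_{\text{vec}}\|^2.
\]

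Next I would convert this into a lower bound on $\mathbb{E}[\|J_y M_{\text{vec}}\|^2 + \|J_v M_{\text{vec}}\|^2]$ via the elementary inequality $\|a+Kb\|^2 \leq 2\|a\|^2 + 2\|K\|^2\|b\|^2 \leq 2\kappa^2(\|a\|^2+\|b\|^2)$, valid since $\|\fixedK\|\leq\kappa$ and $\kappa\geq 1$. Applying it with $a=J_v M_{\text{vec}}$, $b=J_y M_{\text{vec}}$ and chaining with the previous step yields
\[
\mathbb{E}\big[\|J_y M_{\text{vec}}\|^2 + \|J_v M_{\text{vec}}\|^2\big] \;\geq\; \frac{1}{2\kappa^2}\,\mathbb{E}\|J_v M_{\text{vec}} + \fixedK J_y M_{\text{vec}}\|^2 \;\geq\; \frac{\sigma^2}{2\kappa^2}\|M_{\text{vec}}\|^2.
\]
Since this holds for every $M_{\text{vec}}$, in operator form $\mathbb{E}[J_y^\top J_y + J_v^\top J_v] \succeq \frac{\sigma^2}{2\kappa^2} I$, and the stated bound follows because $\frac{\sigma^2}{2\kappa^2} \geq \frac{\gamma^2\sigma^2}{36\kappa^{10}}$ whenever $\kappa\geq 1$ and $\gamma\leq 1$.

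The only genuinely non-routine step is spotting the identity that disentangles the dependence on $\tilde{A}$; once it is written down, the rest is a decoupling of independent quadratic forms together with a triangle-type inequality. Notably this route never invokes the diagonal strong stability decomposition $\tilde{A}=QLQ^{-1}$ and gives a constant independent of $\gamma$, suggesting that the $\gamma^2/\kappa^{10}$ factor stated in the lemma is slack relative to what a direct argument yields. The diagonal strong stability structure presumably enters elsewhere in the surrounding analysis (e.g., in the upper-bound constants $D$ and $G_f$ of Lemma~\ref{lem:gdbound} and in the reduction of Lemma~\ref{lem:mainreductionlemma}) rather than in this positive-definiteness claim itself.
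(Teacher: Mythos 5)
Your proof is correct, and it takes a genuinely different and more economical route than the paper's. The key identity $J_v M_{\text{vec}} + \fixedK J_y M_{\text{vec}} = \tilde v_0(M) = \sum_{i=1}^H M^{[i-1]} w_{2H-i}$ is valid (it is just the linearity of $\tilde v_0$ in $M$, obtained from $v(M) = -\fixedK y(M) + \tilde v_0(M)$ after removing the affine constants), the decoupling $\mathbb{E}\|\tilde v_0(M)\|^2 \geq \sigma^2\|M_{\text{vec}}\|^2$ is exactly the computation the paper itself performs in its $\fixedK=0$ special case ($\mathbb{E}[J_{\tilde v_0}^\top J_{\tilde v_0}] \succeq \sigma^2 I$), and the pointwise inequality $\|J_{\tilde v_0}m\|^2 \leq 2\kappa^2(\|J_v m\|^2+\|J_y m\|^2)$, using only $\|\fixedK\|\leq\kappa$ and $\kappa\geq 1$, transfers that bound to the sum, giving $\mathbb{E}[J_y^\top J_y + J_v^\top J_v]\succeq \frac{\sigma^2}{2\kappa^2}I \succeq \frac{\gamma^2\sigma^2}{36\kappa^{10}}I$. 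The paper instead computes the Gram matrix explicitly in terms of the Toeplitz-type operators $\mathbb{G}$ and $\mathbb{Y}$, proves $\mathbb{G}_I \succeq \frac{1}{4\kappa^4}I$ by exhibiting the inverse of $\mathbb{G}(\psi)$ (Lemmas \ref{lemma:GIlemma} and \ref{lemma:mainmatrixlem}), bounds $\|\mathbb{Y}\|$ via the diagonalization $A-B\fixedK = QLQ^{-1}$ (Lemma \ref{lem:Ynorm}), and finishes with a two-case analysis on $\|(I_\hor\otimes B)m\|$; this is where the diagonal strong stability hypothesis and the $\gamma^2/(36\kappa^{10})$ constant originate. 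Your argument buys three things: it bypasses the explicit inverse computation and case analysis entirely, it needs only $\|\fixedK\|\leq\kappa$ (so strong stability suffices, no complex diagonalizability of $A-B\fixedK$), and it yields a sharper constant with no dependence on $\gamma$ or $B$ — so your closing remark that the lemma's stated constant is slack, and that diagonal strong stability is not needed for this positive-definiteness claim, is accurate. The paper's heavier machinery does yield more refined spectral information (e.g., a lower bound on $\mathbb{E}[J_y^\top J_y]$ through $\mathbb{G}$ alone), but none of that extra information is needed for the statement of Lemma \ref{lemma:mainjacob} or for its downstream uses (Lemma \ref{lem:sc} and Corollary \ref{cor:sph}), which only require a lower bound on the sum $\mathbb{E}[J^\top J]$.
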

To analyze $J_y, J_v$, we will need to rearrange the definition of $y(M)$ to make the dependence on each individual $M^{[i]}$ explicit. To this end consider the following definition for all $k \in [\hor+1]$.
\[\tilde{v}_{k}(M) \defeq \sum_{i=1}^{\hor}M^{[i-1]}w_{2\hor - i - k}\]
Under this definition it follows that
\[y(M) = \sum_{k=1}^{\hor} (A-B\fixedK)^{k-1}B\tilde{v}_k(M) + \sum_{k=1}^{\hor}(A-B\fixedK)^{k-1}w_{2H-k}\]
\[v(M) = -\fixedK y(M) + \tilde{v}_0(M)\]
From the above definitions, $(J_y,J_v)$ may be characterized in terms of the Jacobian of $\tilde{v}_k$ with respect to $M$, which we define for the rest of the section as $J_{\tilde{v}_k}$. Defining $M_{\mathrm{vec}}$ as the stacking of rows of each $M^{[i]}$ vertically, i.e. stacking the columns of $(M^{[i]})^{\top}$, it can be observed that for all $k$,
\[ J_{\tilde{v}_k} = \frac{\partial \tilde{v}_k(M)}{ \partial M} = \left[ I_{d_u} \otimes w_{2\hor-k-1}^{\top} \;\;\;\;I_{d_u} \otimes w_{2\hor-k-2}^{\top} \;\;\ldots\;\; I_{d_u} \otimes w_{\hor-k}^{\top}\right]\]
where $d_u$ is the dimension of the controls. We are now ready to analyze the two simpler cases. Further on in the section we drop the subscripts $\{w_k\}_{k=0}^{2H-1}$ from the expectations for brevity. 

\subsection{Proof of Lemma \ref{lemma:mainjacob}: $\fixedK=0$}
In this section we assume that $\fixedK = 0$ is a $(\kappa, \gamma)$-diagonal strongly stable policy for $(A,B)$. Be definition, we have $v(M) = \tilde{v}_0(M)$.
One may conclude the proof with the following observation.
\[\mathbb{E}[J_y^{\top}J_y + J_{v}^{\top}J_v] \succeq \mathbb{E}[ J_{v}^{\top}J_v]  = \mathbb{E}[ J_{\tilde{v}_0}^{\top}J_{\tilde{v}_0}] = I_{d_u}\otimes \Sigma \succeq \sigma^2\mathbb{I}.\]

\subsection{Proof of Lemma \ref{lemma:mainjacob}: 1-dimensional case}

Note that in the one dimensional case, the policy given by $M = \{M^{[i]}\}_{i=0}^{H-1}$ is an $H$ dimensional vector with $M^{[i]}$ being a scalar. Furthermore $y(M), v(M), \tilde{v}_k(M)$ are scalars and hence their Jacobians $J_y, J_v, J_{\tilde{v}_k}$ with respect to $M$ are $1 \times H$ vectors. In particular we have that,
\[ J_{\tilde{v}_k} = \frac{\partial \tilde{v}_k(M)}{ \partial M} = \left[ w_{2\hor-k-1} \;\;\;\; w_{2\hor-k-2} \;\;\ldots\;\; w_{\hor-k}\right]\]
\noindent Therefore using the fact that $E[w_iw_j] = 0$ for $i\neq j$ and $\mathbb{E}[w_i^2] = \sigma^2$, it can be observed that for any $k_1,k_2$, we have that
\begin{equation}
\label{eqn:mainjacob1}
  \mathbb{E}[J_{\tilde{v_{k_1}}}^{\top}J_{\tilde{v_{k_2}}}] = \mathcal{T}_{k_1 - k_2} \cdot \sigma^2  
\end{equation}
where $\mathcal{T}_m$ is defined as an $\hor \times \hor$ matrix with $[\mathcal{T}_m]_{ij} = 1$ if and only if $i-j = m$ and $0$ otherwise. This in particular immediately gives us that,
\begin{align}
    \label{eqn:jacoby1}
    \mathbb{E}[J_y^{\top}J_{y}] &= \underbrace{\left( \sum_{k_1=1}^{\hor}\sum_{k_2=1}^{\hor} \mathcal{T}_{k_1 - k_2} \cdot (A-B\fixedK)^{k_1-1 + k_2 - 1}\right)}_{\defeq \mathbb{G}}\cdot  B^2\cdot\sigma^2\\    \mathbb{E}[J_{\tilde{v_0}}^{\top}J_y] &=
    \underbrace{\left(\sum_{k=1}^{\hor} \mathcal{T}_{-k} (A-B\fixedK)^{k-1}\right)}_{\defeq \mathbb{Y}}\cdot B \cdot \sigma^2 
\end{align}
First, we prove a few spectral properties of the matrices $\mathbb{G}$ and $\mathbb{Y}$ defined above. From Gershgorin's circle theorem, and the fact that $\fixedK$ is $(\kappa, \gamma)$-diagonal strongly stable, we have
\begin{equation}
\label{eqn:Ynorm1dim}
\|\mathbb{Y}+\mathbb{Y}^{\top}\| \leq \|\sum_{k=1}^{\hor}(\mathcal{T}_{-k} + \mathcal{T}_{k})(A-B\fixedK)^{k-1}\| \leq 2\gamma^{-1}
\end{equation}
The spectral properties of $\mathbb{G}$ summarized in the lemma below form the core of our analysis.
\begin{lemma}
\label{lemma:GIlemma1dim}
$\mathbb{G}$ is a symmetric positive definite matrix. In particular
    \[\mathbb{G} \succeq \frac{1}{4} \cdot I.\]
\end{lemma}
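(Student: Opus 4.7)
The plan is to realize $\mathbb G$ as the Gram matrix of $2H-1$ explicit vectors in $\mathbb R^H$ and then read off a lower bound on its spectrum from a well-chosen triangular sub-family. In the one-dimensional setting, $\rho := A-B\fixedK$ is a real scalar satisfying $|\rho|\le 1-\gamma\le 1$ by diagonal strong stability. Starting from
\[\mathbb G = \sum_{k_1,k_2=1}^H \rho^{k_1+k_2-2}\,\mathcal T_{k_1-k_2},\qquad [\mathcal T_m]_{ij}=\mathbf 1[i-j=m],\]
I would compute $x^\top \mathbb G x$ for $x\in\mathbb R^H$ and perform the reindexing $p = i-k_1 = j-k_2$, which decouples the $(k_1,k_2)$ sum as a square and produces the sum-of-squares identity
\[x^\top \mathbb G x \;=\; \sum_{p=1-H}^{H-1} Y_p^2, \qquad Y_p \;:=\; \sum_{k\in\Omega_p}\rho^{k-1}\,x_{p+k},\]
where $\Omega_p := \{k\in[1,H]:p+k\in[1,H]\}$. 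This identity makes symmetry and positive semi-definiteness of $\mathbb G$ transparent; it exhibits $\mathbb G = \sum_p \mathbf a_p \mathbf a_p^\top$ for the obvious vectors $\mathbf a_p\in\mathbb R^H$ with $[\mathbf a_p]_r = \rho^{r-p-1}\mathbf 1[r-p\in\Omega_p]$.

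Next, I would throw away the rank-one PSD terms with $p<0$ and retain only the $H$ vectors $\mathbf a_0,\dots,\mathbf a_{H-1}$. Stacking them as rows yields an upper-triangular Toeplitz matrix $A'\in\mathbb R^{H\times H}$ with $[A']_{i,j}=\rho^{j-i}$ for $j\ge i$ and $0$ otherwise. Writing $N$ for the nilpotent super-diagonal shift on $\mathbb R^H$ (entries $N_{ij}=\mathbf 1[j=i+1]$, so $N^H=0$ and $\|N\|=1$), the Neumann-series identity
\[A' \;=\; \sum_{k=0}^{H-1}\rho^k N^k \;=\; (I-\rho N)^{-1}\]
is immediate from $(I-\rho N)A' = I - \rho^H N^H = I$.

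The triangle inequality now gives $\|(A')^{-1}\| = \|I-\rho N\| \le 1+|\rho|\le 2$, so $\sigma_{\min}(A')\ge 1/(1+|\rho|)\ge 1/2$. Combining with the sum-of-squares identity,
\[x^\top \mathbb G x \;\ge\; \sum_{p=0}^{H-1}Y_p^2 \;=\; \|A'x\|^2 \;\ge\; \sigma_{\min}(A')^2\|x\|^2 \;\ge\; \tfrac14\|x\|^2,\]
which is precisely $\mathbb G \succeq \tfrac14 I$. The only non-routine move is spotting the reindexing $(i,j,k_1,k_2)\mapsto(p,k_1,k_2)$ that produces the sum of squares and then recognizing that the $p\ge0$ half of the resulting Gram system is exactly the truncated Neumann series $(I-\rho N)^{-1}$; beyond these observations the argument reduces to a one-line operator-norm bound, and I do not anticipate any further obstacle.
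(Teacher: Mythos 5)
Your proof is correct, but it follows a genuinely different route from the paper's. The paper establishes the bound through \lemmaref{lemma:mainmatrixlem}: it guesses the exact inverse of $\mathbb{G}(\psi)$ (an almost-tridiagonal matrix with corner corrections $\alpha,\beta$), verifies $\mathbb{G}(\psi)[\mathbb{G}(\psi)]^{-1}=I$ entry by entry, bounds the entries by $|a|,|\alpha|\le 2$, $|b|,|\beta|\le 1$, and concludes $\|\mathbb{G}(\psi)^{-1}\|\le 4$. You instead exploit the Gram structure directly: the reindexing $p=i-k_1=j-k_2$ yields the sum-of-squares identity $x^\top\mathbb{G}x=\sum_p Y_p^2$, you discard the PSD terms with $p<0$, and you identify the retained triangular sub-family as the unit upper-triangular Toeplitz matrix $A'=\sum_{k=0}^{H-1}\rho^k N^k=(I-\rho N)^{-1}$, so that $\sigma_{\min}(A')\ge 1/\|I-\rho N\|\ge 1/(1+|\rho|)\ge 1/2$ and hence $\mathbb{G}\succeq \frac14 I$. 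Your argument avoids guess-and-verify of an explicit inverse, in fact gives the marginally sharper constant $(1+|\rho|)^{-2}$, and (though you only treat the real scalar $\rho=A-B\fixedK$, which suffices for this one-dimensional lemma) it extends verbatim to the complex case needed for \lemmaref{lemma:mainmatrixlem} by replacing $Y_p^2$ with $|Y_p|^2$ and $\rho$ with $\psi$; what the paper's computation buys in exchange is the exact closed form of $\mathbb{G}(\psi)^{-1}$, which is finer information than the spectral lower bound alone.
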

\noindent Now consider the statements which follow by the respective definitions. 
\begin{align*}
    \mathbb{E}[J_{v}^{\top}J_v] &= \mathbb{K}^2\cdot \mathbb{E}[J_{y}^{\top}J_{y}] - \mathbb{K}\cdot \mathbb{E}[J_{y}^{\top}J_{\tilde{v}_0}] - \mathbb{K}\cdot \mathbb{E}[J_{\tilde{v}_0}^{\top}J_{y}] + \mathbb{E}[J_{\tilde{v}_0}^{\top}J_{\tilde{v}_0}]\\ 
    &= \sigma^2 \cdot \underbrace{\left(B^2\fixedK^2\cdot \mathbb{G} - B \fixedK \cdot (\mathbb{Y} + \mathbb{Y}^{\top}) + I\right)}_{\defeq \mathbb{F}}.
\end{align*}
Now $\mathbb{F} \succeq 0$. To prove Lemma~\ref{lemma:mainjacob}, it suffices that for every vector $m$ of appropriate dimensions, we have that
\[m^{\top}\left(\mathbb{F} + B^2 \cdot \mathbb{G} \right)m \geq \frac{\gamma^2 \|m\|^2}{36\kappa^{10}} .\]
To prove the above we will consider two cases. The first case is when $3|B|\gamma^{-1}\kappa \geq 1$. Noting $\kappa\geq 1$, in this case Lemma \ref{lemma:GIlemma1dim} immediately implies that
\begin{align*}
    m^{\top}\left(\mathbb{F} + B^2 \cdot \mathbb{G}\right)m \geq m^{\top}\left(B^2 \cdot \mathbb{G}\right)m \geq \frac{\frac{1}{4} \|m\|^2}{9\gamma^{-2}\kappa^2} \geq \frac{\gamma^2 \|m\|^2}{36\kappa^{10}},
\end{align*}
In the second case (when $3|B|\gamma^{-1}\kappa \leq 1$), \eqref{eqn:Ynorm1dim} implies that
\begin{align*}
    m^{\top}\left(\mathbb{F} + B^2 \cdot \mathbb{G} \right)m &\geq m^{\top}\left(I - B\fixedK \cdot (\mathbb{Y} + \mathbb{Y}^{\top}) \right)m \geq (1/3)\|m\|^2 \geq \frac{\gamma^2 \|m\|^2}{36\kappa^{10}}.
\end{align*}

\subsubsection{Proof of Lemma~\ref{lemma:GIlemma1dim}}
\noindent Define the following matrix for any complex number $|\psi| < 1$.
\[\mathbb{G}(\psi) = \sum_{k_1=1}^{\hor}\sum_{k_2=1}^{\hor} \mathcal{T}_{k_1 - k_2} \left(\psi^{\dagger}\right)^{k_1-1}\psi^{k_2-1}\]
Note that $\mathbb{G}$ in Lemma \ref{lemma:GIlemma1dim} is equal to  $\mathbb{G}(A - B\fixedK)$. The following lemma provides a lower bound on the spectral properties of the matrix $\mathbb{G}(\psi)$. The lemma presents the proof of a more general case ($\phi$ is complex) that while unnecessary in the one dimensional case, aids the multi-dimensional case. A special case when $\phi = 1$ was proven in \cite{pmlr-v80-goel18a}, and we follow a similar approach relying on the inverse of such matrices. 

\begin{lemma}
\label{lemma:mainmatrixlem}
Let $\psi$ be a complex number such that $|\psi| \leq 1$. Furthermore let $\mathcal{T}_m$ is defined as an $\hor \times \hor$ matrix with $[\mathcal{T}_m]_{ij} = 1$ if and only if $i-j = m$ and $0$ otherwise. Define the matrix $\mathbb{G}(\psi)$ as
\[\mathbb{G}(\psi) = \sum_{k_1=1}^{\hor}\sum_{k_2=1}^{\hor} \mathcal{T}_{k_1 - k_2} \left(\psi^{\dagger}\right)^{k_1-1}\psi^{k_2-1}.\]
We have that
\[\mathbb{G}(\psi) \succeq (1/4)\cdot I_{H}\]
\end{lemma}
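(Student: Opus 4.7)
The plan is to reduce to the case of real nonnegative $\psi$ by a diagonal unitary conjugation, and then split $\mathbb{G}(\rho)$ as $LL^\top + E$, where the ``main part'' $LL^\top$ has an explicit inverse that is easy to bound in operator norm, and the residual $E$ turns out to be positive semidefinite via a Schur product argument.

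For the reduction, I would write $\psi = \rho e^{i\theta}$ with $\rho = |\psi| \in [0,1]$. On the $(i,j)$-entry of $\mathbb{G}(\psi)$ the summands all have $k_1 - k_2 = i - j$, so $(\psi^\dagger)^{k_1-1}\psi^{k_2-1}$ contributes the common scalar $e^{-i\theta(i-j)}$, and pulling this out gives $\mathbb{G}(\psi) = D^\dagger \mathbb{G}(\rho) D$ with $D = \mathrm{diag}(1, e^{i\theta}, \ldots, e^{i(H-1)\theta})$ unitary. Since unitary similarity preserves the spectrum, it suffices to establish $\mathbb{G}(\rho) \succeq \tfrac{1}{4} I_H$ for real $\rho \in [0,1]$.

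For the main part, let $S$ be the lower unit shift ($S_{ij} = \mathbf{1}[i=j+1]$) and set $L = \sum_{k=0}^{H-1}\rho^k S^k$, so $L$ is lower triangular Toeplitz with $L_{ij}=\rho^{i-j}$ for $i\ge j$. Because $S^H = 0$, a one-line telescoping gives $L^{-1} = I - \rho S$; hence $(LL^\top)^{-1} = (I - \rho S^\top)(I - \rho S)$ has operator norm at most $\|I - \rho S\|^2 \le (1+\rho)^2 \le 4$ (using $\|S\|=1$), so $LL^\top \succeq \tfrac{1}{4} I_H$.

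For the residual $E := \mathbb{G}(\rho) - LL^\top$, a direct expansion of $\mathbb{G}(\rho)_{ij} = \rho^{|i-j|}(1-r^{H-|i-j|})/(1-r)$ and $(LL^\top)_{ij} = \sum_{k=1}^{\min(i,j)}\rho^{i+j-2k}$ (with $r = \rho^2$) should collapse, after cancellation of geometric series, to the clean closed form
\[ E_{ij} \;=\; \rho^{i+j} \cdot \frac{1 - r^{H - \max(i,j)}}{1-r}. \]
I would then recognise this as a Hadamard product $E = A \odot B$, where $A_{ij} = \rho^{i+j} = (aa^\top)_{ij}$ with $a = (\rho,\rho^2,\ldots,\rho^H)^\top$ is rank-one PSD, and $B_{ij} = (1 - r^{H-\max(i,j)})/(1-r)$ depends on $(i,j)$ only through $\max(i,j)$ via the decreasing sequence $b_k := (1-r^{H-k})/(1-r)$. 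The positive increments $b_k - b_{k+1} = r^{H-k-1}$ let me write $B = \sum_{k=1}^{H-1} r^{H-k-1}\,\mathbf{1}_{[1:k]}\mathbf{1}_{[1:k]}^\top$ as a nonnegative combination of rank-one PSD matrices, so $B \succeq 0$, and the Schur product theorem gives $E \succeq 0$. Combining, $\mathbb{G}(\rho) = LL^\top + E \succeq \tfrac{1}{4} I_H$. The step most needing care is the residual identity for $E_{ij}$, which requires a careful telescoping; the rest follows from standard operator inequalities and the Schur product theorem.
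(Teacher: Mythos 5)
Your proposal is correct, and it takes a genuinely different route from the paper. The paper proves the bound by exhibiting a closed-form inverse of $\mathbb{G}(\psi)$ itself (a tridiagonal matrix with two corner corrections), verifying $\mathbb{G}(\psi)[\mathbb{G}(\psi)]^{-1}=I$ entry by entry, and then bounding $\|[\mathbb{G}(\psi)]^{-1}\|\leq 4$ from the entry bounds $|\alpha|,|a|\leq 2$, $|\beta|,|b|\leq 1$. You instead strip the phase first ($\mathbb{G}(\psi)=D^{\dagger}\mathbb{G}(\rho)D$ with $D=\mathrm{diag}(1,e^{i\theta},\dots,e^{i(H-1)\theta})$, which is a correct unitary reduction since every entry of $\mathbb{G}(\psi)$ on the $(i-j)$-th diagonal carries the common factor $e^{-i\theta(i-j)}$), and then invert only the triangular factor: with $L=\sum_{k=0}^{H-1}\rho^kS^k$ you get $L^{-1}=I-\rho S$ by nilpotency, hence $LL^{\top}\succeq (1+\rho)^{-2}I\succeq \tfrac14 I$, and the residual $E=\mathbb{G}(\rho)-LL^{\top}$ is PSD. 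I checked your residual identity: for $i\geq j$, $\mathbb{G}(\rho)_{ij}-(LL^{\top})_{ij}=\rho^{i-j}\frac{r^{j}-r^{H-i+j}}{1-r}=\rho^{i+j}\frac{1-r^{H-\max(i,j)}}{1-r}$ with $r=\rho^2$, exactly as you claim, and your expansion $B=\sum_{k=1}^{H-1}r^{H-k-1}\mathbf{1}_{[1:k]}\mathbf{1}_{[1:k]}^{\top}$ is correct, so $E\succeq 0$. What each approach buys: the paper's argument is a single explicit-inverse computation but requires guessing the near-tridiagonal inverse and a multi-case verification, and its formulas degenerate at $|\psi|=1$ (it implicitly works with $|\psi|<1$); your argument only inverts a bidiagonal matrix, yields the slightly sharper constant $(1+|\psi|)^{-2}$, and covers $|\psi|=1$ seamlessly provided you interpret $\frac{1-r^{h}}{1-r}$ as the finite sum $\sum_{l=0}^{h-1}r^{l}$ throughout (do state this, since you divide by $1-r$). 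Two cosmetic remarks: since $A=aa^{\top}$ is rank one, you do not need the Schur product theorem --- $E=D_aBD_a$ with $D_a=\mathrm{diag}(\rho,\dots,\rho^{H})$ is a congruence, so $B\succeq 0$ suffices directly; and your "$\max(i,j)$" bookkeeping silently uses the symmetry of both $\mathbb{G}(\rho)$ and $LL^{\top}$, which is worth one sentence in a final write-up.
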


\subsubsection{Proof of Lemma \ref{lemma:mainmatrixlem}}
\begin{proof}[Proof of Lemma \ref{lemma:mainmatrixlem}]
The following definitions help us express the matrix $\mathbb{G}$ in a more convenient form. For any number $\psi \in \mathbb{C}$, such that $|\psi| < 1$ and any $h$ define,
\[S_{\psi}(h) = \sum_{i=1}^{h} |\psi|^{2(i-1)} = \frac{1 - |\psi|^{2h}}{1 - |\psi|^2}.\]
With the above definition it can be seen that the entries $\mathbb{G}(\psi)$ can be expressed in the following manner,
\[ [\mathbb{G}(\psi)]_{ij} = S_{\psi}(H-|i-j|)\cdot\psi^{i-j} \qquad \text{ if } j \geq i\]
\[ [\mathbb{G}(\psi)]_{ij} = (\psi^{\dagger})^{j-i}\cdot S_{\psi}(H-|i-j|) \qquad \text{ if } i \geq j\]  
Schematically the matrix $\mathbb{G}(\psi)$ looks like

\[\mathbb{G}(\psi) = \begin{bmatrix} 
S_{\psi}(H) & S_{\psi}(H-1)\psi & S_{\psi}(H-2)\psi^{2} & . & . & S(2)\psi^{H-2} & S(1)\psi^{H-1}\\
\psi^{\dagger}S_{\psi}(H-1) & S_{\psi}(H) & S_{\psi}(H-1)\psi & . & . & S(3)\psi^{H-3} & S(2)\psi^{H-2}\\
(\psi^{\dagger})^2S_{\psi}(H-2) & \psi^{\dagger}S_{\psi}(H-1) & S_{\psi}(H) & . & . & S(4)\psi^{H-4} & S(3)\psi^{H-3}\\
. & . & . & .& .& .& . \\
. & . & . & .& .& .& . \\
(\psi^{\dagger})^{H-1}S_{\psi}(1) & (\psi^{\dagger})^{H-2}S_{\psi}(2) & (\psi^{\dagger})^{H-3}S_{\psi}(3) & . & . & \psi^{\dagger}S_{\psi}(H-1) & S_{\psi}(H)
\end{bmatrix}.\]
We analytically compute the inverse of the matrix $\mathbb{G}(\psi)$ below and bound its spectral norm. 

\begin{claim}
The inverse of $\mathbb{G}(\psi)$ has the following form. 
\[[\mathbb{G}(\psi)]^{-1} = \begin{bmatrix} 
\alpha & b & 0 & . & . & 0 & 0 & \beta^{\dagger} \\
b^{\dagger} & a & b & . & . & 0 & 0 & 0 \\
0 & b^{\dagger} & a & . & . & 0 & 0 & . \\
. & 0 & b^{\dagger} & . & . & b & 0 & . \\
. & 0 & 0 & . & . & a & b & 0 \\
0 & 0 & 0 & . & . & b^{\dagger} & a & b \\
\beta & 0 & 0 & . & . & 0 & b^{\dagger} & \alpha 
\end{bmatrix},\]
where the relevant quantities above are given by the following formula
\[ b = \frac{-\psi}{1 + |\psi|^{2H}} \qquad a = -b(\psi^{\dagger} + \psi^{-1})= \frac{1 + |\psi|^2}{1 + |\psi|^{2H}}\]
\[\beta = \frac{(1 - |\psi|^2)}{(1 - (|\psi|^2)^{H+1})}\frac{(\psi^{\dagger})^{H}\psi}{(1 + |\psi|^{2H})} \qquad \alpha = \frac{1 - (|\psi|^2)^{H+2}}{(1 - (|\psi|^2)^{H+1})(1 + (|\psi|^{2H}))}.\]
\end{claim}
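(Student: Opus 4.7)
The plan is direct verification. Let $N$ denote the $H \times H$ matrix displayed on the right-hand side of the claim; we show $\mathbb{G}(\psi)\, N = I_H$, which, since $\mathbb{G}(\psi)$ is square, simultaneously establishes invertibility and the stated formula. The key observation is that $N$ is extremely sparse: each column contains at most three non-zero entries. Consequently $(\mathbb{G}(\psi) N)_{ij}$ is always a linear combination of at most three entries from one row of $\mathbb{G}(\psi)$, and every required identity collapses to a polynomial relation in $x \triangleq |\psi|^2$ through the single telescoping recurrence $S_\psi(h+1) = 1 + x\, S_\psi(h)$, which in particular yields $(1 - x)\, S_\psi(h) = 1 - x^h$ and $S_\psi(h+1) - S_\psi(h) = x^h$.

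For an interior column $2 \le j \le H-1$, the target quantity is $b\, \mathbb{G}_{i,j-1} + a\, \mathbb{G}_{ij} + b^\dagger\, \mathbb{G}_{i,j+1}$. The approach is to split rows $i$ by the sign and magnitude of $i - j$. When $|i - j| \ge 2$ all three relevant entries of $\mathbb{G}(\psi)$ lie in a single triangular branch, and factoring out $\psi^{j-i}/(1 + x^H)$ (or its conjugate) leaves the bracket
\[
-S_\psi(h+1) + (1+x)\, S_\psi(h) - x\, S_\psi(h-1), \qquad h = H - |i - j|,
\]
which vanishes by two applications of the recurrence. The transitional rows $i = j \pm 1$ straddle the diagonal but, after bookkeeping, reduce to the same bracket. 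The diagonal case $i = j$ yields instead $(1 + x)\, S_\psi(H) - 2x\, S_\psi(H-1)$, which equals $1 + x^H$ and precisely cancels the denominator of $a$, producing the value $1$.

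For the corner column $j = 1$, the expression $\alpha\, \mathbb{G}_{i,1} + b^\dagger\, \mathbb{G}_{i,2} + \beta\, \mathbb{G}_{i,H}$ must equal $\delta_{i,1}$. Substituting the closed forms of $\alpha, b^\dagger, \beta$, clearing the common denominator $(1 - x^{H+1})(1 + x^H)$, and multiplying through by $(1 - x)$ to convert each $S_\psi$ into a monomial difference, the off-diagonal case $2 \le i \le H$ reduces, with $k \triangleq H - i + 1$, to the polynomial identity
\[
(1 - x^k)(1 - x^{H+2}) - (1 - x^{k+1})(1 - x^{H+1}) + x^k (1 - x)(1 - x^{H - k + 1}) = 0,
\]
which one verifies by expanding: the first two products contribute $(1-x)(x^{H+1} - x^k)$ while the third contributes its negative. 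The diagonal entry $i = 1$ reduces similarly to the identity $1 + x^H - x^{H+1} - x^{2H+1} = (1 - x^{H+1})(1 + x^H)$. Column $j = H$ is handled by an entirely parallel computation with $b$ and $\beta^\dagger$ playing the roles of $b^\dagger$ and $\beta$.

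The main obstacle is purely algebraic bookkeeping: the values of $\alpha$ and $\beta$ are engineered precisely so that the three otherwise-independent terms in the corner columns cancel, and one must carefully partition row indices by how the nonzero entries of the current column of $N$ sit relative to the diagonal of $\mathbb{G}(\psi)$. Once each subcase is reduced to a polynomial identity in $x$, the single telescoping recurrence for $S_\psi$ dispatches all of them uniformly; no additional analytic input is needed.
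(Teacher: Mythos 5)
Your proposal is correct and takes essentially the same route as the paper: both verify entrywise that $\mathbb{G}(\psi)$ times the candidate matrix equals the identity, splitting into the interior (tridiagonal) columns, the diagonal entries, and the two corner columns, with each case collapsing via the telescoping relation $S_{\psi}(h+1)=1+|\psi|^2 S_{\psi}(h)$ (your reduction to polynomial identities in $x=|\psi|^2$, e.g. the cancellation $(1-x)(x^{H+1}-x^k)$ against its negative, is just a more explicit rendering of the paper's direct evaluations). The differences are purely organizational, not substantive.
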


Since $|\psi| < 1 $, it is easy to see that $|\alpha|,|a|\leq 2$ and $|\beta|,|b|\leq 1$. This immediately implies that $\|(\mathbb{G}(\psi))^{-1}\| \leq 4$ and therefore the lemma follows. 
\\
\\
\noindent To prove the remnant claim, the following may be verified, implying $\mathbb{G}(\psi)[\mathbb{G}(\psi)]^{-1}=I$. 
\begin{itemize}
    \item Lets first consider the diagonal entries and in particular $i=j \in [1,H-2]$. We have that
\[ \left[\mathbb{G}(\psi)[\mathbb{G}(\psi)]^{-1}\right]_{i,i} = b \cdot \psi^{\dagger} S_{\psi}(H-1) + b^{\dagger} \cdot \psi S_{\psi}(H-1) + a S_{\psi}(H) = \frac{-2|\psi|^2S_{\psi}(H-1) + (1 + |\psi|^2)S_{\psi}(H)}{1 + |\psi|^{2H}} = 1  \]
\item Lets consider the diagonal entry $(0,0)$. (The $(H,H)$ entry is the complement and hence equal to 1).
\begin{align*}
   \left[\mathbb{G}(\psi)[\mathbb{G}(\psi)]^{-1}\right]_{0,0} &= \alpha \cdot S_{\psi}(H) + b^{\dagger}\psi S_{\psi}(H-1) + \beta^{\dagger}(\psi^{\dagger})^{H-1}S_{\psi}(1) \\
   &=  \frac{(1 - (|\psi|^2)^{H+2})S_{\psi}(H) - (1 - (|\psi|^2)^{H+1})|\psi|^2 S_{\psi}(H-1) + (1 - |\psi|^2)(|\psi|^{2H})}{(1 - (|\psi|^2)^{H+1})(1 + (|\psi|^{2H}))}\\
   &= 1
\end{align*}
\item Now lets consider non diagonal entries, in particular for $j \in [1,H-2]$ and $i \in [0,H-1]$ and $i > j$. (The case with the same conditions and $j > i$ follows by replacing $\psi$ with $\psi^{\dagger}$ in the computation below)
\begin{align*}
  \left[\mathbb{G}(\psi)[\mathbb{G}(\psi)]^{-1}\right]_{i,j} &= (\psi^{\dagger})^{i-j-1} \left(b(\psi^{\dagger})^{2}S_{H-i+j-1} + b^{\dagger}S_{H-i+j+1} + a(\psi^{\dagger})S_{H-i+j}\right) \\
  &= (\psi^{\dagger})^{i-j}\left(-|\psi|^2S_{H-i+j-1} - S_{H-i+j+1} + (|\psi|^2 + 1)S_{H-i+j}\right) \\
  &= 0
\end{align*}
\item Lastly lets consider the first column, i.e. $j=0$ and $i > 0$. (The case of the last column follows as it is the complement and hence equal to 0.)
\begin{align*}
  \left[\mathbb{G}(\psi)[\mathbb{G}(\psi)]^{-1}\right]_{i,j} &= \alpha \cdot (\psi^{\dagger})^{i}S_{\psi}(H-i) + b \cdot (\psi^{\dagger})^{i-1}S_{\psi}(H-i+1) + \beta \psi^{H-i-1} S_{\psi}(i+1) = 0.
\end{align*}

\end{itemize}

\end{proof}
\section{Conclusion}
We presented two algorithms for controlling linear dynamical systems with strongly convex costs, under certain stability assumptions, with regret that scales poly-logarithmically with time. This improves state-of-the-art known regret bounds that scale as $O(\sqrt{T})$. It remains open to extend the poly-log regret guarantees to more general systems and loss functions, such as exp-concave losses, or alternatively, show that this is impossible.

\subsection*{Acknowledgements}

The authors thank Sham Kakade and Cyril Zhang for various thoughtful discussions.  Elad Hazan acknowledges funding from NSF grant \# CCF-1704860.

\bibliography{our_bib.bib}
\bibliographystyle{plain}

\appendix

\section*{Appendix}

\section{Proof of Theorem \ref{thm:oco_memory}}
\begin{proof}
By the standard OGD strong convexity analysis, if $\eta_t = (\lambda\cdot (t - \hor))^{-1}$, we have that
\begin{equation*}
    \sum\limits_{t=H}^T \tilde{f}_t(x_t) - \min\limits_{x \in \K} \sum\limits_{t=H}^T \tilde{f}_t(x) \leq \frac{G^2}{2\lambda}(1 + \log(T)).
\end{equation*}
In addition, we know by \eqref{eq:memlip} that, for any $t \geq H$,
\begin{align*}
\abs{f_t(x_{t-H},\dots,x_t) - f_t(x_t,\dots,x_t)} &\leq L\sum\limits_{j=1}^H\norm{x_t - x_{t-j}} \leq L\sum\limits_{j=1}^H\sum\limits_{l=1}^j\norm{x_{t-l+1} - x_{t-l}}\\
&\leq L\sum\limits_{j=1}^H\sum\limits_{l=1}^j \eta_{t-l}\norm{\grad \tilde{f}_{t-l}(x_{t-l})} \leq LH^2\eta_{t-\hor} G,
\end{align*}
and so we have that
\begin{equation*}
\abs{\sum\limits_{t=H}^T f_t(x_{t-H},\dots,x_t) - \sum\limits_{t=H}^T f_t(x_t,\dots,x_t)} \leq \frac{LH^2 G}{\lambda}(1 + \log(T)).
\end{equation*}
It follows that
\begin{align*}
    \sum\limits_{t=H}^T f_t(x_{t-H},\dots,x_t) &- \min\limits_{x \in \K} \sum\limits_{t=H}^T f_t(x,\dots,x) \leq \frac{G^2 + L\hor^2G}{\lambda}(1 + \log(T)).
\end{align*}
\end{proof}

\begin{algorithm}[t!]
\caption{OGD with Memory (OGD-M).}
\label{ogdm}
\begin{algorithmic}[1]
\STATE \textbf{Input:} Step size $\eta$, functions $\braces{f_t}_{t=m}^T$
\STATE Initialize $x_0, \dots, x_{\hor-1} \in \K$ arbitrarily.
\FOR{$t = H, \ldots, T$}
\STATE Play $x_t$, suffer loss $f_t(x_{t-H},\dots,x_t)$
\STATE Set $x_{t+1} = \Pi_{\K}\pa{x_t - \eta\grad \tilde{f}_t(x)}$
\ENDFOR
\end{algorithmic}
\end{algorithm}

\section{Proof of Lemma \ref{lem:joc}}
\begin{proof}[Proof of Theorem~\ref{lem:joc}]
Let $M^*= \arg\min_{M\in \mathcal{M}} \sum_{t=1}^T f_t(M)$, $z_t = JM_{vec,t}$ and $z^*=JM_{\text{vec}}^*$. Now, we have, as consequence of strong convexity of $c_t$, that 
\begin{align*}
\sum_{t=1}^T f_t(M_t)-\sum_{t=1}^Tf_t(M^*) &\leq \mathbb{E} \left[ \langle \nabla_z c_t(z_t),z_t-z^*\rangle - \frac{\alpha}{2} \|z_t-z^*\|^2\right].
\end{align*}
With $P=\mathbb{E}[J^T J]$, the choice of the update rule ensures that
\begin{align*}
    \|[M_{t+1}]_{\text{vec}}-M^*_{vec}\|_{P}^2 = \|[M_{t}]_{\text{vec}}-M^*_{\text{vec}}\|_{P}^2 - 2\eta_t \langle \nabla_{[M_{t}]_{\text{vec}}} f_t(M_t),[M_{t}]_{\text{vec}} - M_{\text{vec}}^* \rangle+  \eta_t^2\|\nabla_{[M_{t}]_{\text{vec}}} f_t(M_t)\|_{P^{-1}}.
\end{align*}
Observe by the application of chain rule and linearity of expectation that
\begin{align*}
\mathbb{E} [ \langle \nabla_z c_t(z_t),z_t-z^*\rangle ] &= \mathbb{E}[\langle  \nabla_z c_t(z_t), J([M_{t}]_{\text{vec}}-M_{\text{vec}}^*)\rangle ] \\
&=\langle\nabla_{[M_{t}]_{\text{vec}}} f_t(M_t), [M_{t}]_{\text{vec}} - M_{\text{vec}}^*\rangle,\\
\mathbb{E} [ \|z_t-z^*\|^2 ] &=   \|[M_{t}]_{\text{vec}}-M_{\text{vec}}^*\|_P^2.
\end{align*}
Combining these (in)equalities, we have
\begin{align*}
&\sum_{t=1}^T f_t(M_t)-\sum_{t=1}^Tf_t(M^*) \\
\leq& \sum_{t=1}^T\left( \frac{\|[M_{t}]_{\text{vec}}-M^*_{vec}\|_{P}^2-\|[M_{t+1}]_{\text{vec}}-M^*_{vec}\|_{P}^2}{2\eta_t} + \frac{\eta_t}{2} \|\nabla_{[M_{t}]_{\text{vec}}} f_t(M_t)\|_{P^{-1}}^2 \right)\\
&-\frac{\alpha}{2} \|[M_t]_{vec}-M^*_{vec}\|_P^2 \\
\leq & (2\alpha)^{-1} \max_{M\in \mathcal{M}} \|\nabla_{M_{\text{vec}}} f_t(M)\|_{P^{-1}}^2\log T
\end{align*}
\end{proof}

\section{Proof of Lemma \ref{lem:mainreductionlemma}}

Since the proof of Lemma will borrow heavily from the definitions introduced by \cite{agarwal2019online}, we restate those definitions here for convenience. Please note that some of these definitions overload our previous definitions but it will be clear from the context.  

\subsection{Definitions}

\begin{enumerate}
    \item Let $x^K_t(M_{0:t-1})$ is the state attained by the system upon execution of a non-stationary policy $\pi(M_{0:t-1}, K)$. We similarly define $u^K_t(M_{0:t-1})$ to be the action executed at time $t$. If the same policy $M$ is used across all time steps, we compress the notation to $x_t^K(M), u_t^K(M)$. Note that $x_t^{K}(0),u_t^{K}(0)$ refers to running the linear policy $K$.
    \item $\Psi_{t,i}^{K,h}(M_{t-h:t})$ is a transfer matrix that describes the effect of $w_{t-i}$ with respect to the past $h+1$ policies on the state $x_{t+1}$, formally defined below. When $M$ is the same across all arguments we compress the notation to $\Psi_{t,i}^{K,h}(M)$.
\end{enumerate}

\begin{definition}
For any $t,h \leq t,i \leq \hor + h$, define the disturbance-state transfer matrix $\Psi_{t,i}^{K,h}$ to be a function with $h+1$ inputs defined as 
\begin{equation*}
    \Psi_{t,i}^{K,h}(M_{t-h:t}) = 
    \tilde{A}_K^i \mathbf{1}_{i\leq h} + \sum_{j=0}^{h} \tilde{A}_K^j BM_{t-j}^{[i-j-1]} \mathbf{1}_{i-j \in [1,\hor]}.
\end{equation*}
\end{definition}

\begin{definition}[Surrogate State \& Surrogate Action]
Define,
\begin{align*}
y^K_{t+1}(M_{t-\hor:t}) &= \sum_{i=0}^{2\hor} \Psi_{t,i}^{K,\hor}(M_{t-\hor:t}) w_{t-i}, \\
v_{t+1}^K(M_{t-\hor:t+1}) &= -Ky^K_{t+1}(M_{t-\hor:t})+\sum_{i=1}^\hor M_{t+1}^{[i-1]} w_{t+1-i}.
\end{align*}
When $M$ is the same across all arguments we compress the notation to $y_{t+1}^K(M),v_{t+1}^K(M)$.
\end{definition}
\begin{definition}[Surrogate Cost]
\label{defn:idealcostA}
Define the surrogate cost function $f_t$ to be the cost associated with the surrogate state and surrogate action, i.e.,
\begin{equation*}
    f_{t}(M_{t-\hor-1:t}) = \mathbb{E}\left[c_t(y^K_{t}(M_{t-\hor-1:t-1}), v^K_{t}(M_{t-\hor-1:t}))\right].
\end{equation*}
When $M$ is the same across all arguments we compress the notation to $f_t(M)$.
\end{definition}
Note that this definition coincides exactly with Definition \ref{defn:idealcost} in the main text. 

\subsection{Prerequisites}

In this section we state some lemmas and theorems which were proved in \cite{agarwal2019online}. Due to consistency of definitions the proofs of these are omitted and can be found in \cite{agarwal2019online}.

\begin{lemma}[Sufficiency]\label{l:repstat}
For any two $(\kappa,\gamma)$-diagonal strongly stable matrices $K^*, K$, there exists $M_*=(M_*^{[0]}, \ldots, M_*^{[\hor-1]}) \in \M$ defined as \[M_*^{[i]} = (K - K^*)(A - BK^*)^{i}\] such that
\begin{equation*}
    \sum_{t=0}^{T} \left( c_t(x_t^K(M_*),u_t^K(M_*)) - c_t(x_t^{K^*}(0),u_t^{K^*}(0)) \right)  \leq \\ T \cdot \frac{2\CostGradBound\BigBound\PertBound \hor \kappa_B^2 \kappa^5(1 - \gamma)^{\hor}}{\gamma}.
\end{equation*}
\end{lemma}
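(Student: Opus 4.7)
The plan is to exhibit that the disturbance-action policy parameterized by $M_*$, when applied on top of the base controller $K$, essentially mimics the trajectory of the stationary linear policy $K^*$ up to an exponentially small tail error. The argument proceeds by (i) writing out the closed-loop state evolution for both trajectories in terms of noise, (ii) inductively controlling the state gap using the strong stability of $A - BK$, and (iii) converting the state/action gap into a cost gap via Assumption~\ref{a2}.

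First I would write the state under the linear policy $K^*$ explicitly. Since $x_0^{K^*}(0) = 0$ and the closed-loop dynamics are $x_{t+1} = (A - BK^*) x_t + w_t$, unrolling gives $x_t^{K^*}(0) = \sum_{i=1}^{t} (A - BK^*)^{i-1} w_{t-i}$, and consequently $u_t^{K^*}(0) = -K^* x_t^{K^*}(0)$. With the definition $M_*^{[i-1]} = (K - K^*)(A - BK^*)^{i-1}$, the diagonal strong stability of $K^*$ (through $\|(A - BK^*)^{i-1}\| \leq \kappa^2 (1-\gamma)^{i-1}$) yields $\|M_*^{[i-1]}\| \leq 2\kappa^3 (1-\gamma)^{i-1}$, which membership in $\M$ tolerates (up to trivial absorption into $\kappa_B$ since $\kappa_B \geq \|B\|$; otherwise a factor is paid in constants). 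Along the trajectory generated by $K$ with corrections $M_*$, the chosen action is $u_t^K(M_*) = -K x_t^K(M_*) + \sum_{i=1}^{H} (K - K^*)(A - BK^*)^{i-1} w_{t-i}$.

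The key algebraic manipulation is the following. Let $\delta_t \defeq x_t^K(M_*) - x_t^{K^*}(0)$ and let $\tau_t \defeq \sum_{i=H+1}^{t} (A - BK^*)^{i-1} w_{t-i}$ denote the ``tail'' of the $K^*$-state past horizon $H$. Plugging in, one finds that the action gap admits the clean form
\[
u_t^K(M_*) - u_t^{K^*}(0) = -K \delta_t - (K - K^*) \tau_t,
\]
because the first $H$ terms of $x_t^{K^*}(0)$ are exactly what $M_*$ was designed to produce. Substituting into the dynamics then gives the recursion
\[
\delta_{t+1} = (A - BK) \delta_t - B(K - K^*) \tau_t, \qquad \delta_0 = 0.
\]
Next I would use diagonal strong stability of $K$ (so $\|(A - BK)^j\| \leq \kappa^2 (1-\gamma)^j$) together with $\|\tau_t\| \leq W \kappa^2 (1-\gamma)^H / \gamma$ to unroll and obtain $\|\delta_t\| \leq O\!\left(W \kappa_B \kappa^5 (1-\gamma)^H / \gamma^2\right)$ uniformly in $t$, and correspondingly $\|u_t^K(M_*) - u_t^{K^*}(0)\| \leq O\!\left(W \kappa_B \kappa^5 (1-\gamma)^H / \gamma\right)$.

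Finally I would invoke the Lipschitz bound from Assumption~\ref{a2}: since both trajectories remain within the bounded set of radius $\memdiam$,
\[
|c_t(x_t^K(M_*), u_t^K(M_*)) - c_t(x_t^{K^*}(0), u_t^{K^*}(0))| \leq G \memdiam \bigl( \|\delta_t\| + \|u_t^K(M_*) - u_t^{K^*}(0)\| \bigr),
\]
and summing over $t \in [0, T]$ yields the advertised $T \cdot O(G \memdiam W H \kappa_B^2 \kappa^5 (1-\gamma)^H / \gamma)$ bound. The main obstacle is the careful bookkeeping in unrolling $\delta_t$: the recursion is driven by the ``future'' tail $\tau_t$, and one must be careful that the $\kappa^2$ from $(A - BK)^j$, the $\kappa_B$ from $B$, the $2\kappa$ from $(K - K^*)$, and the $\kappa^2/\gamma$ from summing the tail all combine to give exactly the constants stated, with the extra $H$ factor arising from a union bound on truncation error accumulated during the first $H$ steps before the memory buffer is fully populated.
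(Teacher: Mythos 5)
The paper itself does not prove this lemma --- it defers to \cite{agarwal2019online} --- and your argument is, in essence, the same one: design $M_*$ so that the disturbance-action policy reproduces the $K^*$-trajectory up to an exponentially small tail, then convert the state/action gap into a cost gap via Assumption~\ref{a2}. Your recursion $\delta_{t+1}=(A-BK)\delta_t - B(K-K^*)\tau_t$ is an equivalent repackaging of the telescoping identity $(A-BK^*)^i=(A-BK)^i+\sum_{j=0}^{i-1}(A-BK)^jB(K-K^*)(A-BK^*)^{i-j-1}$ that the cited proof applies to the transfer matrices $\Psi_{t,i}$, and your action-gap identity and the bounds $\|\tau_t\|\leq W\kappa^2(1-\gamma)^{\hor}/\gamma$, $\|\delta_t\|\leq 2W\kappa_B\kappa^5(1-\gamma)^{\hor}/\gamma^2$ are correct. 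Three bookkeeping points deserve a fix. First, your stated action-gap bound is miscounted: the $-K\delta_t$ term contributes $O(W\kappa_B\kappa^6(1-\gamma)^{\hor}/\gamma^2)$, not $O(W\kappa_B\kappa^5(1-\gamma)^{\hor}/\gamma)$; your final bound therefore carries $\kappa_B\kappa^6/\gamma^2$ where the lemma states $\hor\kappa_B^2\kappa^5/\gamma$, and matching the advertised constant requires either $\kappa_B\gtrsim\kappa$ or using $\gamma^{-1}\leq \hor$ and absorbing the remaining $\kappa/\kappa_B$ --- worth stating explicitly rather than waving at. Second, your explanation of the factor $\hor$ (``truncation error accumulated during the first $\hor$ steps before the memory buffer is populated'') is wrong: for $t\leq \hor+1$ one has $\tau_t=0$ and hence $\delta_t=0$ exactly, and the trajectories coincide; the $\hor$ in the stated bound is simply slack (in the cited proof it arises from counting the $\hor$ residual terms in $\Psi_{t,i}$ for $i>\hor$). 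Third, two side conditions are asserted rather than checked: membership $M_*\in\M$ requires $2\kappa^3(1-\gamma)^{i-1}\leq \kappa^3\kappa_B(1-\gamma)^i$, i.e.\ $\kappa_B(1-\gamma)\geq 2$ (a constant mismatch also present in the source, but it should be acknowledged as a constant-factor redefinition of $\M$, not ``absorbed into $\kappa_B$''), and the invocation of Assumption~\ref{a2} needs the separate argument that both trajectories' states and actions stay bounded by $\Diameter$, which in \cite{agarwal2019online} is its own lemma. None of these affects the structure of the argument, which is sound.
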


\begin{theorem}
\label{thm:approxthm}
For any $(\kappa, \gamma)$-diagonal strongly stable $K$, any $\tau > 0$, and any sequence of policies $M_1 \ldots M_T$ satisfying $\|M_t^{[i]}\| \leq \tau(1 - \gamma)^i$, if the perturbations are bounded by $\PertBound$, we have that

\begin{equation*}
      \sum_{t=1}^T f_{t}(M_{t - \hor-1:t}) - \sum_{t=1}^T c_t(x_t^K(M_{0:t-1}),u_t^K(M_{0:t}))  \leq 2T \CostGradBound \BigBound^2 \kappa^3 (1 - \gamma)^{\hor + 1},
\end{equation*}
where
\[\BigBound \triangleq \frac{W\kappa^3(1 + \hor\BBound\tau)}{\gamma ( 1 - \kappa^2(1 - \gamma)^{\hor+1})} + \frac{\tau W}{\gamma}.\]
\end{theorem}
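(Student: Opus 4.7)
My plan is to bound, for each $t$, the per-step gap $|f_t(M_{t-H-1:t}) - c_t(x_t^K(M_{0:t-1}), u_t^K(M_{0:t}))|$ and then sum over $t$. Since $f_t$ is the expectation of $c_t$ evaluated at the surrogate pair $(y_t^K, v_t^K)$, and $c_t$ is Lipschitz on bounded sets, the gap reduces to controlling $\|x_t^K - y_t^K\|$ and $\|u_t^K - v_t^K\|$.

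First I would unroll the LDS recursion under the non-stationary policy $\pi(M_{0:t-1}, K)$. Writing $\tilde{A} = A - BK$, iteration of $x_{s+1} = \tilde{A} x_s + B \sum_{i=1}^H M_s^{[i-1]} w_{s-i} + w_s$ gives
\begin{equation*}
x_{t+1}^K(M_{0:t}) = \sum_{i=0}^{2H} \Psi_{t,i}^{K,t}(M_{0:t}) w_{t-i} + \sum_{i > 2H} \Psi_{t,i}^{K,t}(M_{0:t}) w_{t-i},
\end{equation*}
and by definition $y_{t+1}^K(M_{t-H:t}) = \sum_{i=0}^{2H} \Psi_{t,i}^{K,H}(M_{t-H:t}) w_{t-i}$. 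So the difference decomposes into two pieces: (i) the ``tail'' terms $i > 2H$, whose coefficient contains a factor $\tilde{A}^i$ with $i \geq 2H+1$; and (ii) for $i \leq 2H$, the discrepancy between $\Psi_{t,i}^{K,t}$ (which sees the full policy history) and $\Psi_{t,i}^{K,H}$ (which sees only the last $H+1$ policies), whose surviving summands all contain a factor $\tilde{A}^j$ with $j \geq H+1$.

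Second, I would plug in the diagonal-strong-stability bound $\|\tilde{A}^{H+1}\| \leq \kappa^2 (1-\gamma)^{H+1}$, the assumed $\|M_s^{[i-1]}\| \leq \tau (1 - \gamma)^i$, $\|B\| \leq \kappa_B$, and $\|w_s\| \leq W$. Summing the resulting geometric series (both over the tail $i > 2H$ and over the $j \geq H+1$ drift factors in piece (ii)) yields $\|x_t^K - y_t^K\| \leq D \kappa^2 (1 - \gamma)^{H+1}$ with $D$ as in the statement; an extra $\|K\| \leq \kappa$ factor appears for $\|u_t^K - v_t^K\|$ since $u_t^K$ and $v_t^K$ differ only through $-K(x_t^K - y_t^K)$ (the disturbance-action pieces coincide once $t \geq H+1$). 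Because the same estimates give $\|x_t^K\|, \|y_t^K\|, \|u_t^K\|, \|v_t^K\| \leq D$, Assumption~\ref{a2} yields $|c_t(x_t^K,u_t^K) - c_t(y_t^K, v_t^K)| \leq G D \cdot (\|x_t^K-y_t^K\| + \|u_t^K-v_t^K\|) \leq 2 G D^2 \kappa^3 (1-\gamma)^{H+1}$. Taking expectations over the noise, using the tower property to recognize $\mathbb{E}[c_t(y_t^K,v_t^K)] = f_t(M_{t-H-1:t})$, and summing over $t \in [1,T]$ delivers the claimed bound.

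The main obstacle is the bookkeeping in piece (ii): identifying precisely which cross-terms in $\Psi_{t,i}^{K,t} - \Psi_{t,i}^{K,H}$ survive under different regimes of $(i,j)$, and verifying that each such surviving term carries at least one $\tilde{A}$ factor with exponent $\geq H+1$ so that the geometric $(1-\gamma)^{H+1}$ factor indeed appears uniformly. Once that indexing is handled carefully, the rest is a routine geometric-series computation combined with the Lipschitz bound.
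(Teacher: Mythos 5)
Your proposal is correct and follows essentially the same route as the argument of Agarwal et al.\ (2019), which this paper cites in place of a proof: compare the realized state/action with the surrogate pair, note that every discrepancy term carries a factor $\tilde{A}^{j}$ with $j \geq H+1$ (in fact the difference collapses to the single identity $x_{t+1}^K - y_{t+1}^K = \tilde{A}^{H+1}x_{t-H}^K$, which disposes of your piece-(ii) bookkeeping in one line), and finish with the Lipschitz bound plus the uniform bound $D$ on states and actions. The one step you pass over quickly---that $\|x_t\|,\|y_t\|,\|u_t\|,\|v_t\|\leq D$ with the specific $D$ of the statement, whose $(1-\kappa^2(1-\gamma)^{H+1})^{-1}$ factor arises from a short recursion over windows of length $H+1$ rather than from the Lipschitz decomposition itself---is exactly the auxiliary lemma of the cited work, and it goes through as you expect.
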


\subsection{Proof of Lemma \ref{lem:mainreductionlemma}}
\begin{proof}[Proof of Lemma \ref{lem:mainreductionlemma}]
Let $\BigBound$ be defined as 
\[\BigBound \triangleq \frac{W\kappa^3(1 + \hor\BBound \tau)}{\gamma ( 1 - \kappa^2(1 - \gamma)^{\hor+1})} + \frac{\BBound \kappa^3 W}{\gamma}.\]

Let $K^*$ be the optimal linear policy in hindsight. By definition $K^*$ is a $(\kappa,\gamma)$-diagonal strongly stable matrix. Using Lemma \ref{l:repstat} and Theorem \ref{thm:approxthm}, we have that
\begin{align}
\label{eqn:approxopt}
    \min_{M_* \in \M}& \left(\sum_{t=0}^{T}  f_t(M_*)\right) - \sum_{t=0}^Tc_t(x_t^{K^*}(0),u_t^{K^*}(0)) \nonumber \\
    &\leq \min_{M_* \in \M} \left( \sum_{t=0}^T c_t(x_t^K(M_*),u_t^K(M_*))\right) - \sum_{t=0}^Tc_t(x_t^{K^*}(0),u_t^{K^*}(0)) + 2T \CostGradBound \BigBound^2 \kappa^3 (1 - \gamma)^{\hor + 1} \nonumber \\
    &\leq 2T\CostGradBound\BigBound (1 - \gamma)^{\hor + 1}\left(  \frac{\PertBound \hor \kappa_B^2 \kappa^5}{\gamma} + \BigBound \kappa^3 \right).
 \end{align}

Note that by definition of $\M$, we have that
\[ \forall t \in [T], \forall i \in [\hor] \;\;\; \|M_t^{[i]}\| \leq \BBound\kappa^3(1 - \gamma)^i.\]
Using Theorem \ref{thm:approxthm} we have that
\begin{align}
\label{eqn:approxplayed}
    \sum_{t=0}^{T} c_t(x_t^{K}(M_{0:t-1}),u_t^{K}(M_{0:t-1})) - \sum_{t=0}^{T} f_t(M_{t - \hor-1:t}) \leq 2T \CostGradBound \BigBound^2 \kappa^3 (1 - \gamma)^{\hor + 1}.
\end{align}

Summing up \eqref{eqn:approxopt} and \eqref{eqn:approxplayed} and using the condition that $\hor \geq \frac{1}{\gamma}\log(T\kappa^2)$, we get the result.\qedhere
\end{proof}

\section{Proof of Strong Convexity(Lemma \ref{lem:sc}): Multi-dimensional}
\begin{proof}[Proof of Lemma \ref{lemma:mainjacob}]

\label{sec:appsc}
Building on Section \ref{sec:sc1dim}, we prove Lemma \ref{lemma:mainjacob} for multi-dimensional systems. Using the fact that $E[w_iw_j^{\top}] = 0$ for different $i,j$ and $\mathbb{E}[w_iw_i^{\top}] = \Sigma$, it can be observed that for any $k_1,k_2$ and any $d_u \times d_u$ matrix $P$, we have that
\begin{equation}
\label{eqn:mainjacob}
  \mathbb{E}[J_{\tilde{v_{k_1}}}^{\top}PJ_{\tilde{v_{k_2}}}] = \mathcal{T}_{k_1 - k_2} \otimes P \otimes \Sigma  
\end{equation}
where $\mathcal{T}_m$ is defined as an $\hor \times \hor$ matrix with $[\mathcal{T}_m]_{ij} = 1$ if and only if $i-j = m$ and $0$ otherwise. This in particular immediately gives us that for any matrix $P$,
\begin{align}
    \label{eqn:jacoby}
    &\mathbb{E}[J_y^{\top}PJ_{y}] =\left( \sum_{k_1=1}^{\hor}\sum_{k_2=1}^{\hor} \mathcal{T}_{k_1 - k_2} \otimes \left(\left(B^{\top}(A-B\fixedK)^{\top}\right)^{k_1-1}P(A-B\fixedK)^{k_2-1}B\right) \right)\otimes \Sigma \nonumber \\
    &= \left(\left(I_{H} \otimes B^{\top}\right) \underbrace{\left(\sum_{k_1=1}^{\hor}\sum_{k_2=1}^{\hor} \mathcal{T}_{k_1 - k_2} \otimes \left(\left((A-B\fixedK)^{\top}\right)^{k_1-1}P(A-B\fixedK)^{k_2-1}\right) \right)}_{\defeq \mathbb{G}_P} \left(I_{\hor} \otimes B\right)\right) \otimes \Sigma
\end{align}
Furthermore consider the following calculation
\begin{align}
    \mathbb{E}[J_{\tilde{v_0}}^{\top}\fixedK J_y] &=
    \left(\sum_{k=1}^{\hor} \mathcal{T}_{-k} \otimes \fixedK(A-B\fixedK)^{k-1}B\right) \otimes \Sigma \\
    &= \left((I_H \otimes \fixedK)\underbrace{ \left(\sum_{k=1}^{\hor} \mathcal{T}_{-k} \otimes (A-B\fixedK)^{k-1}\right)}_{\defeq \mathbb{Y}}(I_{\hor} \otimes B)\right) \otimes \Sigma
\end{align}
As before, we state the following bounds on the spectral properties of the matrices $\mathbb{G}$ and $\mathbb{Y}$ defined above. 
\begin{lemma}\label{lem:Ynorm}
\begin{equation}
\|\mathbb{Y}\| \leq \| \sum_{k=1}^{\hor}\mathcal{T}_{-k}(A-B\fixedK)^{k-1}\| \leq \gamma^{-1}\kappa^2
\end{equation}
\end{lemma}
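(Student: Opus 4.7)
The plan is to bound $\|\mathbb{Y}\|$ directly by exploiting (i) the diagonal strong stability of $\fixedK$, which gives geometric decay of the powers of $A-B\fixedK$, and (ii) the fact that each $\mathcal{T}_{-k}$ is a shift matrix with unit spectral norm. The first inequality in the lemma is bookkeeping that mirrors \eqref{eqn:Ynorm1dim}, and the substantive content is the second inequality $\leq \gamma^{-1}\kappa^2$.

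First I would use the diagonal strong stability decomposition $A-B\fixedK = QLQ^{-1}$ with $L$ complex diagonal, $\|L\|\leq 1-\gamma$ and $\|Q\|,\|Q^{-1}\|\leq \kappa$. Then for each $k\ge 1$,
\[
\|(A-B\fixedK)^{k-1}\| = \|Q L^{k-1} Q^{-1}\| \leq \|Q\|\cdot\|L\|^{k-1}\cdot\|Q^{-1}\| \leq \kappa^2 (1-\gamma)^{k-1}.
\]
Next I would observe that $\mathcal{T}_{-k}$ is a partial shift matrix (entries in $\{0,1\}$, with at most one $1$ in each row and column), so its operator norm equals $1$. Combined with the multiplicativity of the spectral norm under Kronecker products, $\|\mathcal{T}_{-k}\otimes (A-B\fixedK)^{k-1}\| = \|\mathcal{T}_{-k}\|\cdot\|(A-B\fixedK)^{k-1}\| \leq \kappa^2(1-\gamma)^{k-1}$.

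Finally I would apply the triangle inequality to the defining sum for $\mathbb{Y}$ and sum the resulting geometric series:
\[
\|\mathbb{Y}\| \leq \sum_{k=1}^{H} \|\mathcal{T}_{-k}\otimes (A-B\fixedK)^{k-1}\| \leq \kappa^2 \sum_{k=1}^{H}(1-\gamma)^{k-1} \leq \frac{\kappa^2}{\gamma}.
\]
The first displayed inequality of the lemma follows from the identification of $\mathbb{Y}$ as the block matrix whose nonzero blocks are copies of $(A-B\fixedK)^{k-1}$ placed according to the pattern of $\mathcal{T}_{-k}$, so its norm coincides with that of the block-structured sum on the right-hand side.

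There is essentially no obstacle here: the argument is a one-line triangle-inequality estimate once the diagonal-stability bound on $\|(A-B\fixedK)^{k-1}\|$ is in hand. The only mild subtlety, shared with the scalar calculation \eqref{eqn:Ynorm1dim}, is checking that the Kronecker factor $\mathcal{T}_{-k}$ contributes a factor of $1$ rather than something growing with $H$; this is immediate from it being a (partial) permutation matrix.
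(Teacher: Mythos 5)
Your proof is correct, and it reaches the bound by a more elementary route than the paper. You bound each Kronecker summand separately, using $\|(A-B\fixedK)^{k-1}\| = \|QL^{k-1}Q^{-1}\| \leq \kappa^2(1-\gamma)^{k-1}$, $\|\mathcal{T}_{-k}\|\leq 1$, and multiplicativity of the spectral norm over Kronecker products, and then sum the geometric series to get $\kappa^2/\gamma$; every step is valid (including the edge case $\mathcal{T}_{-H}=0$, which only helps). The paper instead factors the $Q$'s out of the entire sum once, writing $\mathbb{Y} = (I_{\hor}\otimes Q)\bigl(\sum_{k=1}^{\hor}\mathcal{T}_{-k}\otimes L^{k-1}\bigr)(I_{\hor}\otimes Q^{-1})$, reduces the middle factor (since $L$ is diagonal) to the scalar-parameter matrix $P=\sum_{k=1}^{H}\mathcal{T}_{-k}\phi^{k-1}$ with $|\phi|\leq 1-\gamma$, and bounds $\|P\|=\sqrt{\|PP^{\top}\|}$ via a row-sum estimate, obtaining $\|P\|\leq \bigl((1-|\phi|)(1-|\phi|^2)\bigr)^{-1/2}$. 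The two arguments cost the same $\kappa^2$ factor and the same $\gamma^{-1}$ up to a constant (the paper's row-sum bound is marginally sharper, roughly $1/(\gamma\sqrt{2-\gamma})$ versus your $1/\gamma$, but it is rounded to $\gamma^{-1}$ anyway), so nothing is lost by your approach; what the paper's factorization buys is mainly stylistic consistency with the companion Lemma on $\mathbb{G}_I$, where pulling out $I\otimes Q$ and reducing to a scalar $\psi$ is actually essential. Your handling of the first inequality in the statement as a notational identification of $\mathbb{Y}$ with the block-structured sum is the right reading; the paper treats it the same way.
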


\begin{lemma}
\label{lemma:GIlemma}
$\mathbb{G}_I$ (where $I$ represents the Identity matrix) is a symmetric positive definite matrix with
    \[\mathbb{G}_{I} \succeq \Glb \cdot I_{\hor d_x} \]
\end{lemma}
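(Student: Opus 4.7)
The plan is to reduce the block quadratic form defining $\mathbb{G}_I$ to a superposition of one-dimensional instances already controlled by Lemma~\ref{lemma:mainmatrixlem}. The bridge is the complex diagonalization $A - B\fixedK = QLQ^{-1}$ guaranteed by diagonal strong stability, with $L = \mathrm{diag}(\psi_1,\ldots,\psi_{d_x})$, $|\psi_p|\le 1-\gamma$, and $\|Q\|,\|Q^{-1}\|\le\kappa$. First I would take an arbitrary real test vector $v \in \reals^{H d_x}$ with blocks $v_1, \ldots, v_H \in \reals^{d_x}$, expand the Kronecker structure, and reindex by $s = k_1 - i = k_2 - j$ to recognize the quadratic form as a sum of squared norms:
\[ v^\top \mathbb{G}_I v \;=\; \sum_{s=-(H-1)}^{H-1} \|\zeta_s\|^2, \qquad \zeta_s \defeq \sum_{i\in I_s}(A-B\fixedK)^{i+s-1} v_i,\]
where $I_s = \{i\in[1,H]: i+s\in[1,H]\}$. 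This identity alone certifies $\mathbb{G}_I \succeq 0$, and it will serve as the rigid geometric handle enabling the quantitative bound.

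Next I would substitute $(A-B\fixedK)^{i+s-1} = Q L^{i+s-1} Q^{-1}$ to write $\zeta_s = Q\xi_s$, where $\xi_s \defeq \sum_{i\in I_s} L^{i+s-1} u_i$ and $u_i \defeq Q^{-1}v_i$; the inequality $\|Q^{-1}\|\le\kappa$ then buys $\|\zeta_s\|^2 \ge \kappa^{-2}\|\xi_s\|^2$. Since $L$ is diagonal, the $p$-th coordinate of $\xi_s$ decouples into the scalar sum $(\xi_s)_p = \sum_{i\in I_s} \psi_p^{i+s-1} u_{i,p}$, and summing $|(\xi_s)_p|^2$ over $s$ reproduces exactly (up to a $\psi_p \leftrightarrow \psi_p^{\dagger}$ swap on off-diagonals) the Hermitian form $u_{\cdot,p}^{*}\, \mathbb{G}(\psi_p)\, u_{\cdot,p}$ that Lemma~\ref{lemma:mainmatrixlem} bounds. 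Applying that lemma for each $p$ (valid since $|\psi_p|\le 1-\gamma<1$) yields $\sum_s |(\xi_s)_p|^2 \ge (1/4)\|u_{\cdot,p}\|^2$; summing over $p$ and using $\sum_i \|u_i\|^2 \ge \|Q\|^{-2}\|v\|^2 \ge \kappa^{-2}\|v\|^2$, then chaining with the first $\kappa^{-2}$ factor, closes the bound at
\[ v^\top \mathbb{G}_I v \;\ge\; \kappa^{-2}\sum_s\|\xi_s\|^2 \;\ge\; \frac{1}{4\kappa^4}\|v\|^2.\]

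The hard part will be the bookkeeping around the complex diagonalization: $u_i$ and $\xi_s$ are genuinely complex even though $v$ and $\zeta_s$ are real, and the matrix arising in the reduction may differ from the $\mathbb{G}(\psi_p)$ of Lemma~\ref{lemma:mainmatrixlem} by the conjugation swap noted above. This is harmless since $|\psi_p^\dagger|=|\psi_p|\le 1-\gamma$, so Lemma~\ref{lemma:mainmatrixlem} applies equally to $\psi_p^\dagger$ and delivers the same $(1/4)\,I$ lower bound; still, the substitution should be verified by re-running the closed-form inverse calculation of that lemma with $\psi_p^\dagger$ in place of $\psi_p$. A secondary nuisance is careful handling of the index sets $I_s$ during the telescoping, but since that step is an identity rather than an inequality, no loss is incurred there.
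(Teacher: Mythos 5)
Your proof is correct and takes essentially the same route as the paper's: the identity $v^{\top}\mathbb{G}_I v=\sum_s\|\zeta_s\|^2$ is exactly the paper's factorization of $\mathbb{G}_I$ through the tall shifted-power matrix $\hat{\mathbb{G}}$, and you then lose the same two factors of $\kappa^{-2}$ (from $Q$ and $Q^{-1}$) before reducing, coordinate-wise along the diagonal of $L$, to Lemma~\ref{lemma:mainmatrixlem}, yielding the identical constant $\frac{1}{4\kappa^4}$. Your worry about a $\psi_p\leftrightarrow\psi_p^{\dagger}$ swap is moot: the Hermitian form you obtain is exactly $u_{\cdot,p}^{\dagger}\,\mathbb{G}(\psi_p)\,u_{\cdot,p}$, and in any case the lemma's bound depends only on $|\psi_p|\leq 1-\gamma$.
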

Consider the following calculations which follows by definitions. 
\begin{align*}
    \mathbb{E}[J_{v}^{\top}J_v] &= \mathbb{E}[J_{y}^{\top}\fixedK^{\top}\fixedK J_{y}] - \mathbb{E}[J_{y}^{\top}\fixedK^{\top}J_{\tilde{v}_0}] - \mathbb{E}[J_{\tilde{v}_0}^{\top}\fixedK J_{y}] + \mathbb{E}[J_{\tilde{v}_0}^{\top}J_{\tilde{v}_0}]\\ 
    &= \underbrace{\left((I_{\hor} \otimes B^{\top})\mathbb{G}_{\fixedK^{\top}\fixedK} (I_{\hor} \otimes B) - \mathbb{Y}(I_{\hor}\otimes B) - (I_{\hor}\otimes B^{\top})\mathbb{Y}^{\top} + I_{\hor d_u}\right)}_{\defeq \mathbb{F}}\otimes \Sigma
\end{align*}
Since we know that $\Sigma \succeq 0$ we immediately get that $\mathbb{F}  \succeq 0$. Using the above calculations it is enough to show that the following matrix has lower bounded eigenvalues, i.e. for every vector $m$ of appropriate dimensions, we have that
\[m^{\top}\left(\mathbb{F} + (I_{\hor} \otimes B^{\top})\mathbb{G}_{I} (I_{\hor} \otimes B)\right)m \geq \frac{\gamma^2\|m\|^2}{36\kappa^{10}} \]

To prove the above we will consider two cases. The first case is when $\|(I_{\hor} \otimes B)m\| \geq \frac{\gamma\|m\|}{3\kappa^3}$. In this case note that
\begin{align*}
    m^{\top}\left(\mathbb{F} + (I_{\hor} \otimes B^{\top})\mathbb{G}_{I} (I_{\hor} \otimes B)\right)m \geq m^{\top}\left((I_{\hor} \otimes B^{\top})\mathbb{G}_{I} (I_{\hor} \otimes B)\right)m \geq \frac{\Glb \gamma^2 \|m\|^2}{9\kappa^6}
\end{align*}
In the second case (when $\|(I_{\hor} \otimes B)m\| \leq \frac{\gamma\|m\|}{3\kappa^3}$), we have that
\begin{align*}
    m^{\top}\left(\mathbb{F} + (I_{\hor} \otimes B^{\top})\mathbb{G}_{I} (I_{\hor} \otimes B)\right)m &\geq m^{\top}\left(  I_{\hor d_u} - (I_{\hor}\otimes \fixedK)\mathbb{Y}(I_{\hor}\otimes B) - (I_{\hor}\otimes B^{\top})\mathbb{Y}^{\top}(I_{\hor}\otimes \fixedK^{\top}) \right)m \\
    &\geq (1/3)\|m\|^2 \geq \frac{\gamma^2\|m\|^2}{36\kappa^{10}}.
\end{align*}
\end{proof}
\noindent We now finish the proof with the proof of Lemmas \ref{lem:Ynorm} and \ref{lemma:GIlemma}.
\begin{proof}[Proof of Lemma \ref{lem:Ynorm}]
Since $\fixedK$ is $(\kappa, \gamma)$-diagonal strongly stable, we can diagonalize the matrix $A-B\fixedK$ as $A-B\fixedK = QLQ^{-1}$ with $\|Q\|,\|Q\|^{-1} \leq \kappa$. Therefore,
\[\mathbb{Y} = \left(\sum_{k=1}^{\hor} \mathcal{T}_{-k} \otimes QL^{k-1}Q^{-1}\right) = (I_{\hor} \otimes Q)\left(\sum_{k=1}^{\hor} \mathcal{T}_{-k} \otimes L^{k-1}\right)(I_{\hor} \otimes Q^{-1}).\]
Now consider the matrix $P$ for any complex number $\phi$ with $|\phi| < 1$.
\[P = \sum_{k=1}^{H} \mathcal{T}_{-k} \phi^{k-1} \]
We wish to bound $\|P\|$. To this end consider $PP^{\top}$ and consider the $\ell_1$ norm of any row. It can easily be seen that the $\ell_1$ norm of any row of $PP^{\top}$ is bounded by $\frac{1}{1 - |\phi|}\cdot \frac{1}{1 - |\phi|^2}$, and therefore
\[\|P\| = \sqrt{\|PP^{\top}\|} \leq \sqrt{\frac{1}{(1 - |\phi|)(1 - |\phi|^2)}}.\]
Using that $L$ is diagonal with entries bounded in magnitude by $1 - \gamma$, we get that $\|\mathbb{Y}\| \leq \gamma^{-1}\kappa^2$.
\end{proof}

\begin{proof}[Proof of Lemma \ref{lemma:GIlemma}]
We need to consider the following matrix 
\[\mathbb{G}_I = \sum_{k_1=1}^{\hor}\sum_{k_2=1}^{\hor} \mathcal{T}_{k_1 - k_2} \otimes \left(\left((A-B\fixedK)^{\top}\right)^{k_1-1}(A-B\fixedK)^{k_2-1}\right)\]
Since $\fixedK$ is $(\kappa, \gamma)$-diagonal strongly stable, we can diagonalize the matrix $A-B\fixedK$ as $A-B\fixedK = QLQ^{-1}$ with $\|Q\|,\|Q\|^{-1} \leq \kappa$. Further since $A-B\fixedK$ is a real valued matrix we have that $(A-B\fixedK)^{\top} = (Q^{-1})^{\dagger}L^{\dagger}Q^{\dagger}$. Therefore we have that
\[\mathbb{G}_I = \sum_{k_1=1}^{\hor}\sum_{k_2=1}^{\hor} \mathcal{T}_{k_1 - k_2} \otimes \left((Q^{-1})^{\dagger}\left(L^{\dagger}\right)^{k_1-1}Q^{\dagger}QL^{k_2-1}Q^{-1}\right)\]
Further consider the following matrix $\hat{\mathbb{G}}$.
\[ \hat{\mathbb{G}} = \begin{bmatrix} 
0 & 0 & . & . & I\\
0 & . & . & . & L\\
. & . & . & . & L^2\\
. & 0 & . & . & .\\
0 & I & . & . & .\\
I & L & . & . & L^{H-1}\\
L & L^2 & . & . & 0\\
L^2 & . & . & . & 0\\
. & . & . & . & .\\
. & L^{H-1} & . & . & .\\
L^{H-1} & 0 & . & . & 0
\end{bmatrix}\]
It can be seen that, 
\begin{equation}
\label{eqn:sc1}
\left(( I_{2H-1}  \otimes Q)\hat{\mathbb{G}}(I_{2H-1} \otimes Q^{-1})\right)^{\dagger}\left(( I_{2H-1}\otimes Q)\hat{\mathbb{G}}( I_{2H-1} \otimes Q^{-1})\right) = \mathbb{G}_I.    
\end{equation}

Furthermore note that since $\|Q\|,\|Q^{-1}\| \leq \kappa$, therefore all singular values of $Q$ lie in the range $[\kappa^{-1},\kappa]$. Therefore it follows that
\begin{equation}
\label{eqn:sc2}
    Q^{\dagger}Q \succeq \kappa^{-2}I \qquad (Q^{-1})^{\dagger}Q^{-1} \succeq \kappa^{-2}I
\end{equation}
Using \eqref{eqn:sc1},\eqref{eqn:sc2} it follows that
\begin{equation}
    \label{eqn:sc3}
    \mathbb{G}_I \succeq \kappa^{-4}\cdot \left(\hat{\mathbb{G}}\right)^{\dagger}\left(\hat{\mathbb{G}}\right)
\end{equation}

Therefore we only need to show that $\left(\hat{\mathbb{G}}\right)^{\dagger}\left(\hat{\mathbb{G}}\right)$ has a lower bounded eigenvalue. To that end notice that since $L$ is a diagonal matrix with diagonal values whose magnitude is upper bounded by 1. Therefore, it sufficient to consider the case when $L$ is a scalar complex number with magnitude upper bounded by 1. To this end we can consider the following simplification of $\mathbb{G}_{I}$ defined for a complex number $\psi$ with $|\psi| < 1$ as defined earlier.

\[\mathbb{G}(\psi) = \sum_{k_1=1}^{\hor}\sum_{k_2=1}^{\hor} \mathcal{T}_{k_1 - k_2} \left(\psi^{\dagger}\right)^{k_1-1}\psi^{k_2-1}\]
Invoking Lemma \ref{lemma:mainmatrixlem} we immediately get that
\[\mathbb{G}_I \succeq \kappa^{-4}\cdot \left(\hat{\mathbb{G}}\right)^{\dagger}\left(\hat{\mathbb{G}}\right) \succeq \frac{1}{4\kappa^4} \cdot I_{Hd_x}.\]
\end{proof}

\end{document}